\documentclass{article}
\usepackage{iclr2027_conference,times}

\usepackage{amsmath,amsfonts,bm}

\def\1{\bm{1}}

\DeclareMathAlphabet{\mathsfit}{\encodingdefault}{\sfdefault}{m}{sl}
\SetMathAlphabet{\mathsfit}{bold}{\encodingdefault}{\sfdefault}{bx}{n}

\newcommand{\Hc}{\mathbb{H}^{d}_{c}}

\usepackage[T1]{fontenc}
\usepackage[utf8]{inputenc}
\usepackage{amsmath,amssymb,amsthm,mathtools}
\usepackage{booktabs,multirow,makecell,tabularx,array}
\usepackage[table]{xcolor}
\usepackage{graphicx,adjustbox,wrapfig,subcaption}
\usepackage{tikz}
\usetikzlibrary{arrows.meta,calc,positioning,fit,backgrounds,decorations.pathreplacing,patterns,shapes.geometric}
\usepackage{pgfplots}
\pgfplotsset{compat=1.18}
\usepackage{hyperref}
\hypersetup{hidelinks}
\usepackage{url}
\usepackage{enumitem}
\usepackage{threeparttable}
\usepackage[section]{placeins}
\usepackage{float}
\usepackage{siunitx}
\usepackage{pifont}
\usepackage{cleveref}
\usepackage{xspace}
\usepackage{setspace}
\newcommand{\aurora}{\textsc{Aurora}\xspace}
\newcommand{\raoc}{\textsc{RAOC}\xspace}
\usepackage{comment}

\usepackage{titlesec}

\titlespacing*{\section}{0pt}{6pt}{3pt}
\titlespacing*{\subsection}{0pt}{5pt}{2pt}
\titlespacing*{\subsubsection}{0pt}{4pt}{2pt}
\titlespacing*{\paragraph}{0pt}{3pt}{4pt}
\definecolor{auroraBlue}{HTML}{2B6F8E}
\definecolor{auroraTeal}{HTML}{5FA7A0}
\definecolor{auroraMint}{HTML}{DDEFEA}
\definecolor{auroraGold}{HTML}{E6B85C}
\definecolor{auroraRose}{HTML}{D9898D}
\definecolor{auroraGray}{HTML}{EEF1F4}
\definecolor{generalHead}{HTML}{D9EAF2}
\definecolor{fineHead}{HTML}{C6E2E3}
\definecolor{specialHead}{HTML}{ACD3D2}
\definecolor{metricHead}{HTML}{DDEBED}
\definecolor{submetricHead}{HTML}{EAF3F3}
\definecolor{generalCell}{HTML}{EAF2F6}
\definecolor{fineCell}{HTML}{DDEDED}
\definecolor{specialCell}{HTML}{CBE3E2}
\definecolor{vitBand}{HTML}{F3F5F7}
\definecolor{baselineBlue}{HTML}{EAF2F8}
\definecolor{hyperGreen}{HTML}{EAF5EE}
\definecolor{oursRow}{HTML}{D7EFE8}
\definecolor{todoGray}{HTML}{F2F2F2}
\newcommand{\cmark}{\ding{51}}
\newcommand{\xmark}{\ding{55}}

\newcommand{\rot}[1]{\rotatebox{90}{\strut #1}}

\newtheorem{theorem}{Theorem}
\newtheorem{proposition}{Proposition}

\newtheorem{corollary}{Corollary}

\theoremstyle{remark}

\title{Separating Angular Alignment from\\Radial Ordering in Hyperbolic\\Vision--Language Models}

\author{%
\textbf{Mingyang Gao$^{1}$ \quad Yuwei Wu$^{1}$ \quad Shiyu Luo$^{1}$ \quad Wei Wu$^{1}$ \quad Xiaomeng Fan$^{1}$}\\
\textbf{Qihao Lv$^{1}$ \quad Haoyu Zhang$^{1}$ \quad Rui Gao$^{1}$ \quad Bingzhen Liu$^{1}$ \quad Zhi Gao$^{1,*}$ \quad Che Sun$^{2}$}\\
{\normalfont $^{1}$Beijing Institute of Technology}\\
{\normalfont $^{2}$Shenzhen MSU-BIT University}\\
{\normalfont \texttt{gaomingyang@bit.edu.cn} \quad $^{*}$Corresponding author: \texttt{gaozhibit@bit.edu.cn}}\\
{\normalfont \url{https://github.com/Gmy-bit/AURORA}}
}

\iclrfinalcopy

\begin{document}
\maketitle
\lhead{Preprint}


\begin{abstract}
Hyperbolic vision--language models are well suited to modeling the hierarchical structure of multimodal data, because hyperbolic volume grows exponentially with radius, enabling low-distortion representations of tree-like taxonomies from general to specific concepts.
However, existing pretraining objectives for hyperbolic vision--language models entangle semantic alignment with entailment modeling, forcing radial coordinates to simultaneously encode semantic matching and hierarchical structure, leading to conflicting geometric constraints.
In this paper, we introduce \aurora, a hyperbolic vision--language framework that decouples the objectives through angular alignment and a novel \emph{Radial--Angular Order Cone} (\raoc) for entailment modeling. Angular alignment captures semantic similarity while preserving the radial structure for representing hierarchies. Unlike conventional entailment cones, \raoc determines angular allowance from relative parent--child radial gaps. It is incorporated into hierarchical entailment contrastive training to capture partial-order relations. We further introduce a norm regularization term that prevents undesirable radial degeneration for stable training.
We train models on millions of image--text pairs and evaluate them across zero-shot classification, image--text retrieval, hierarchical reasoning, and compositional understanding. Our model consistently outperforms strong Euclidean and hyperbolic vision--language baselines, demonstrating the effectiveness of decoupling semantic alignment from hierarchical entailment modeling. 
\end{abstract}

\section{Introduction}
\label{sec:intro}

Hyperbolic vision--language models, such as MERU~\citep{desai2023meru} and HyCoCLIP~\citep{pal2025hycoclip}, embed images and text in hyperbolic spaces (negatively curved Riemannian manifolds)~\citep{ratcliffe1994foundations} to represent hierarchical relations among visual and linguistic concepts. In the Lorentz model~\citep{cannon1997hyperbolic}, a model of hyperbolic space, distance from the origin serves as a proxy for semantic specificity, with more general concepts represented at smaller radii~\citep{nickel2017poincare}. Under the hyperbolic entailment convention, parent--child relations form a partial order~\citep{vendrov2016order}: abstract parents entail more specific children, whose embeddings lie within their parents' entailment cones~\citep{ganea2018cones}. For example, the text ``\textit{a dog}'' entails the more specific description ``\textit{a dog running on grass}'', and an image depicting ``\textit{a dog running on grass}'' is compatible with both descriptions. Modeling this partial order allows matching visual and textual concepts to occupy different levels of the hierarchy, reflecting their differences in specificity. Hyperbolic volume grows exponentially with radius~\citep{krioukov2010hyperbolic}, accommodating the increasing number of child nodes with depth. Hyperbolic space also enables low-distortion tree embeddings~\citep{sarkar2011low,sala2018tradeoffs}, making it naturally suitable for representing hierarchies.

Learning the hierarchical structures of data requires compatible objectives for semantic alignment and hierarchical entailment. 
However, existing hyperbolic pretraining objectives impose conflicting constraints on radial information: semantic alignment favors similar radii for matched representations, while hierarchical entailment uses radial separation to distinguish parents from children. Moreover, entailment supervision based on positive pairs encourages specified parent--child relations while leaving unpaired relations unpenalized. As a result, semantically unrelated representations may still form spurious hierarchical relations. For example, in Figure ~\ref{fig:cone_comparison}, the text  ``\textit{apple}'' could incorrectly entail an image depicting ``\textit{a rocket successfully lifting off}''.
%

We analyze the geometric compatibility of Lorentz-distance alignment and conventional entailment cones to clarify the optimization conflict, identifying two limitations. Firstly, minimizing Lorentz distance between a matched parent--child pair favors reducing the radial gap that models their difference in specificity.
Conversely, an unmatched child can remain inside a spurious parent's cone even when moved arbitrarily far outward along its radial direction. These cases reveal a mismatch between semantic discrimination and hierarchical entailment. Secondly, conventional cone apertures~\citep{le2019inferring} are inversely coupled to the norm of the parent representation. As this norm approaches the admissible lower limit, the cone becomes less restrictive for partial-order relations. 
\begin{figure}[t]
    \centering
    \includegraphics[width=\linewidth]{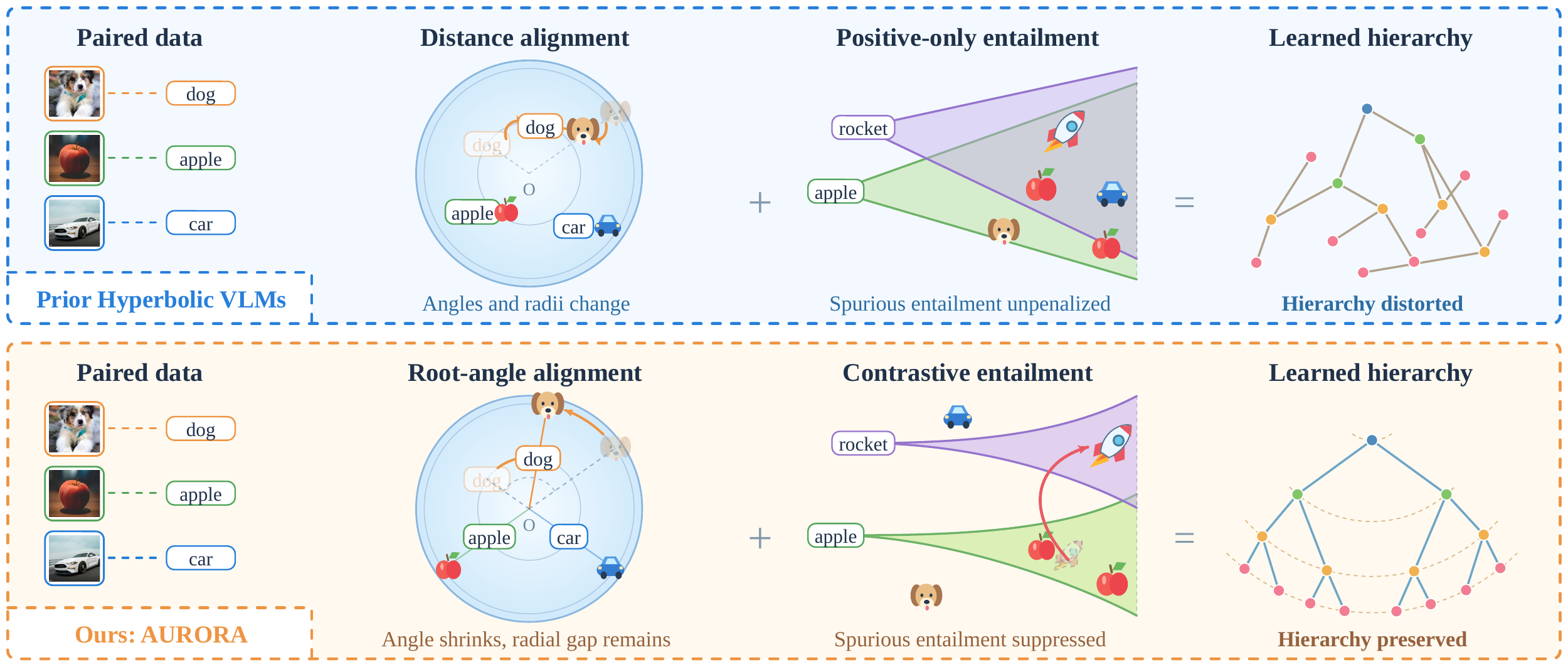} 
    \caption{\textbf{Motivation.}
    \textbf{Top:} Prior hyperbolic VLMs pull paired embeddings together, disrupting radial ordering and entailment. The apple image can even lie closer to the Lorentz origin than ``apple''. 
    \textbf{Bottom:} AURORA preserves radial separation and cleaner entailment: the rocket returns to its own cone, incorrect relations are reduced, and a coherent hierarchy emerges.}
    \label{fig:cone_comparison}
    \vspace{-10pt}
\end{figure}
In this paper, we propose \aurora{}, a pretraining framework that separates \textbf{A}ng\textbf{U}lar alignment from \textbf{R}adial \textbf{O}rdering in hype\textbf{R}bolic vision--l\textbf{A}nguage models. \aurora learns semantic alignment and hierarchical entailment through distinct, complementary objectives. 
For semantic matching, we perform angular alignment using cosine similarity between the spatial directions of Lorentz embeddings, allowing 
their radial positions to model specificity. For hierarchical entailment, we introduce the \emph{Radial--Angular Order Cone}, which controls angular allowance through the parent--child radial gap, replacing the dependence on the parent's norm in conventional cones. 
We prove that the resulting geometric entailment relation defines a partial order 
with nested descendant regions, ensuring that a child's descendants are also descendants of its parent. We further introduce \emph{hierarchical entailment contrastive} training, using 
negative pairs to suppress spurious entailment between unpaired examples. Finally, we present a norm regularization term that prevents undesirable radial degeneration, stabilizing the radial scale during training. Our contributions are summarized as follows:

\begin{itemize}[leftmargin=1.25em,itemsep=.5pt,topsep=1pt,parsep=0pt]
    \item We introduce \aurora, a hyperbolic vision--language pretraining framework that decouples angular semantic alignment from radial modeling of hierarchical entailment.    
    \item We define the \emph{Radial--Angular Order Cone} (\raoc), prove that its geometric entailment relation defines a partial order, and develop hierarchical entailment contrastive (HEC) training on \raoc to better reinforce hierarchical relations.
    \item We train hyperbolic vision--language models with \aurora and demonstrate consistent improvements in zero-shot classification, retrieval, hierarchical reasoning, and compositional understanding over competitive Euclidean and hyperbolic baselines.
\end{itemize}

\section{Preliminaries}
\label{sec:prelim}

\begingroup
\setlength{\abovedisplayskip}{4pt}
\setlength{\belowdisplayskip}{4pt}
\setlength{\abovedisplayshortskip}{2pt}
\setlength{\belowdisplayshortskip}{3pt}

Hyperbolic space is a Riemannian manifold with constant negative
curvature. We adopt the $d$-dimensional Lorentz model with
curvature $-c$, where $c>0$ denotes the curvature
magnitude~\citep{nickel2018lorentz}.
The Lorentz model represents hyperbolic space as the upper
sheet of a two-sheeted hyperboloid in $(d+1)$-dimensional
Minkowski space.

For vectors $\mathbf{x}=[x_0;\tilde{\mathbf{x}}]$ and
$\mathbf{y}=[y_0;\tilde{\mathbf{y}}]$ in $\mathbb R^{d+1}$,
$x_0,y_0\in\mathbb R$ are the time components and
$\tilde{\mathbf{x}},\tilde{\mathbf{y}}\in\mathbb R^d$
are the spatial components.
Their Lorentz inner product is
\begin{equation}
\langle \mathbf{x},\mathbf{y}\rangle_{\mathcal L}
=-x_0y_0+\tilde{\mathbf{x}}^\top\tilde{\mathbf{y}}.
\label{eq:lorentz-inner}
\end{equation}
The Lorentz model is defined as
\begin{equation}
\Hc=\left\{
\mathbf{x}\in\mathbb R^{d+1}:
\langle \mathbf{x},\mathbf{x}\rangle_{\mathcal L}
=-\frac{1}{c},\ x_0>0
\right\},
\label{eq:lorentz-model}
\end{equation}
where $x_0>0$ selects the upper sheet.
Its origin is $O=[1/\sqrt c;\mathbf{0}_d]$, and the
Lorentz distance between two points
$\mathbf{x},\mathbf{y}\in\Hc$ is
\begin{equation}
d_c(\mathbf{x},\mathbf{y})
=\frac{1}{\sqrt c}\operatorname{arcosh}\!\left(
-c\langle \mathbf{x},\mathbf{y}\rangle_{\mathcal L}
\right).
\label{eq:lorentz-distance}
\end{equation}

Following prior hyperbolic vision--language
models~\citep{desai2023meru,pal2025hycoclip},
we treat the projected encoder features
$\mathbf{z}\in\mathbb R^d$ as tangent vectors
$[0;\mathbf{z}]\in T_O\Hc$ at the origin.
The exponential map converts these tangent vectors
into hyperbolic embeddings:
\begin{equation}
\mathbf{x}
=\exp_O^c\!\left([0;\mathbf{z}]\right)
=\frac{1}{\sqrt c}
\left[
\cosh(\sqrt c\|\mathbf{z}\|);
\;
\sinh(\sqrt c\|\mathbf{z}\|)
\frac{\mathbf{z}}{\|\mathbf{z}\|}
\right],
\label{eq:expmap-origin}
\end{equation}
with the zero vector mapped to the origin,
$\exp_O^c(\mathbf{0}_{d+1})=O$.

For a non-origin embedding $\mathbf{x}$, we use the
radial coordinate $\rho_x=\sqrt c\,d_c(O,\mathbf{x})$
and the unit spatial direction
$\mathbf{u}_x=\tilde{\mathbf{x}}/\|\tilde{\mathbf{x}}\|$.
At fixed curvature, larger $\rho_x$ corresponds to
greater distance from $O$.
Their connection to the Lorentz coordinates is provided
in Appendix~\ref{app:rootangle}.

\endgroup

\section{Compatibility of Alignment and Entailment}
\label{sec:analysis}
\begingroup
\setlength{\abovedisplayskip}{4pt}
\setlength{\belowdisplayskip}{4pt}
\setlength{\abovedisplayshortskip}{2pt}
\setlength{\belowdisplayshortskip}{3pt}
\paragraph{Distance alignment couples direction and radius.}
For non-origin embeddings $p,q\in\Hc$, let
$\rho_x=\sqrt c\,d_c(O,x)$ denote radial depth and
$\theta_{pq}=\angle pOq$ their angular separation at the origin.
Writing $\delta_{pq}=\sqrt c\,d_c(p,q)$, the hyperbolic
law of cosines gives
\begin{equation}
\begin{aligned}
\cosh\delta_{pq}
&=\cosh\rho_p\cosh\rho_q
  -\sinh\rho_p\sinh\rho_q\cos\theta_{pq}\\
&=\underbrace{\cosh(\rho_q-\rho_p)}_{\text{radial-gap contribution}}
 +\underbrace{\sinh\rho_p\sinh\rho_q
 (1-\cos\theta_{pq})}_{\text{radius-weighted angular contribution}}.
\end{aligned}
\label{eq:lawcos}
\end{equation}
The first term depends only on the radial gap, and the second
depends on angular separation and is weighted by both radii.
Distance alignment therefore acts on radial depth as well as
semantic direction.
For a parent--child pair with $\rho_q>\rho_p$, holding the
parent and both directions fixed, decreasing the child's
radius toward the parent's decreases their distance.
Even when $\theta_{pq}=0$, the distance remains
$(\rho_q-\rho_p)/\sqrt c$.
Thus, distance minimization continues to penalize the radial
separation that encodes different hierarchical levels.
The derivation is given in
Appendix~\ref{app:alignment-analysis}.

\paragraph{Conventional cones depend on more than pairwise distance.}
Conventional hyperbolic entailment cones admit descendants
within an angular region extending outward from the
parent~\citep{ganea2018cones,desai2023meru}.
Their membership is not determined by distance:
two distinct embeddings at the same radial depth can be
arbitrarily close yet remain outside each other's cones,
whereas a child on the parent's outward radial ray remains
inside its cone at arbitrarily large distances.
Consequently, bringing a matched pair closer does not ensure
entailment, and pushing an unrelated pair farther apart does
not ensure its exclusion from the descendant region.
With positive-only entailment supervision, such erroneous
relations receive no direct entailment penalty.
Cone selectivity also depends on the parent's absolute depth:
for a fixed cone-width setting, the half-aperture satisfies
$\sin\omega(p)\propto1/\sinh\rho_p$.
As the parent moves inward toward its admissible lower radius,
$\omega(p)$ approaches $\pi/2$, admitting the entire outward
hemisphere of geodesic directions at the boundary.
Parent contraction can therefore weaken angular discrimination.
The distance counterexamples and aperture analysis are provided
in Appendix~\ref{app:alignment-analysis}.

Together, these observations motivate semantic alignment based
on direction alone, an entailment relation defined through
parent--child radial gaps and angular differences, and direct
entailment supervision for both positive and negative pairs.

\begin{figure}[t]
    \centering
    \includegraphics[width=1\linewidth]{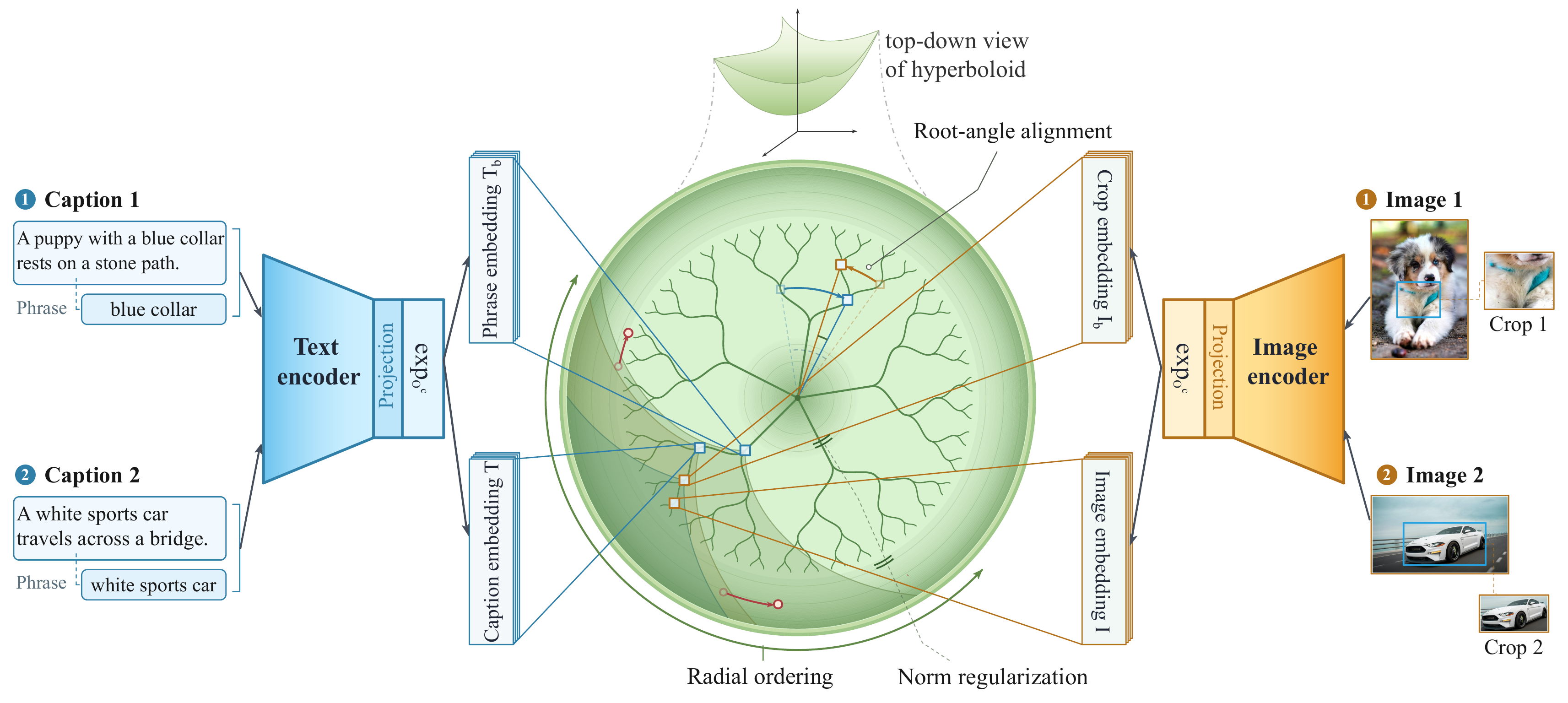}
    \caption{\textbf{Overview of \aurora.} Root-angle alignment brings related visual and textual semantics together, while HEC training via RAOC organizes concepts radially from general to specific.}
    \label{fig:aurora_overview}
    \vspace{-8pt}
\end{figure}
\endgroup
\section{AURORA}
\label{sec:method}
\begingroup
\setlength{\abovedisplayskip}{4pt}
\setlength{\belowdisplayskip}{4pt}
\setlength{\abovedisplayshortskip}{2pt}
\setlength{\belowdisplayshortskip}{3pt}

\subsection{Formulation}
\label{sec:architecture}

Following the training setup of HyCoCLIP~\citep{pal2025hycoclip}, we use image--caption pairs $(I,T)$ with object-level annotations. A bounding box in $I$ specifies an object crop $I^b$, and the corresponding phrase $T^b$ describes its content. Depicted in Figure \ref{fig:aurora_overview}, AURORA uses a visual encoder $f_v$ and a text encoder $f_t$, followed by projections $P_v$ and $P_t$, respectively. Images and crops share the visual branch, while captions and phrases share the text branch. For an input $X\in\{I,T,I^b,T^b\}$, let $f$ and $P$ denote the corresponding encoder and projector. We obtain the projected feature $z_X=P(f(X))$ and its hyperbolic representation $x_X=\exp_O^c(0,z_X)$.

Hierarchy supervision uses the relations types $\mathcal R=\{T\!\to\!I,\;T^b\!\to\!I^b,\;I^b\!\to\!I,\;T^b\!\to\!T\}$, with arrows pointing from general to specific. Text is treated as more general than its visual counterpart, which contains additional instance details. Likewise, crops and phrases are treated as more general than full images and captions, which add scene context. We align matching visual and textual representations through root-angle alignment (Sec.~\ref{sec:root-alignment}) and learn the hierarchical relations in $\mathcal R$ through RAOC and HEC training (Secs.~\ref{sec:raoc}--\ref{sec:contrastive-entailment}). Norm regularization controls their radial scale.

\subsection{Root-angle alignment}
\label{sec:root-alignment}

For two non-origin hyperbolic representations $x,y\in\Hc\setminus\{O\}$ from either modality, let $\theta_{xy}=\angle xOy$ be their root angle. Write $z_x$ and $z_y$ for the projected features that generate $x$ and $y$ through the origin exponential map. This map preserves their directions, giving $u_x=z_x/\|z_x\|$ and $u_y=z_y/\|z_y\|$. We measure semantic similarity by $s_{\angle}(x,y)=\cos\theta_{xy}=u_x^\top u_y$. This score depends only on direction, leaving radial differences available for hierarchical ordering.

We optimize $s_{\angle}$ with a pairwise sigmoid loss.
For embedding batches $A=\{a_i\}_{i=1}^{N}$ and
$B=\{b_j\}_{j=1}^{N}$, set $y_{ij}=+1$ for matching pairs
($i=j$) and $y_{ij}=-1$ otherwise.
With $\sigma(t)=(1+e^{-t})^{-1}$,
\begin{equation}
\mathcal L_{\sigma}(A,B)
=-\frac1N\sum_{i=1}^{N}\sum_{j=1}^{N}
\log\sigma\!\left(y_{ij}\left[
a_m s_{\angle}(a_i,b_j)+b_m
\right]\right),
\label{eq:alignment-sigmoid}
\end{equation}
where $a_m>0$ and $b_m$ are a learned scale and bias. The complete alignment objective is
\begin{equation}
\mathcal L_{\mathrm{align}}=\frac14\left[\mathcal L_\sigma(I,T)+\mathcal L_\sigma(I^b,T^b)+\mathcal L_\sigma(I^b,T)+\mathcal L_\sigma(T^b,I)\right].
\label{eq:alignment-loss}
\end{equation}
The first two terms align corresponding visual and textual content, while crop--caption and phrase--image matching connect object-level semantics to the full scene. All similarity-based downstream evaluations use $s_{\angle}$.

\subsection{Radial--Angular Order Cone}
\label{sec:raoc}

Root-angle alignment measures semantic agreement, while
hierarchical entailment additionally requires an order
from general to specific.
For a candidate parent $p$ and descendant $q$ in
$\Hc\setminus\{O\}$, let
$\rho_x=\sqrt c\,d_c(O,x)$ denote scaled radial depth.
We define the signed radial gap
$g(p,q)=\rho_q-\rho_p$ and root angle
$\theta_{pq}=\angle pOq$.
The \emph{Radial--Angular Order Cone} (RAOC) defines
the admissible descendants of $p$:
\begin{equation}
\mathcal C_\beta(p)=
\left\{
q\in\Hc\setminus\{O\}:
g(p,q)\ge\frac{\theta_{pq}}{\beta}
\right\},
\qquad \beta>0.
\label{eq:raoc-region}
\end{equation}

\begin{wrapfigure}{r}{0.32\columnwidth}
  \vspace{-12pt}
  \centering
  \includegraphics[width=\linewidth]{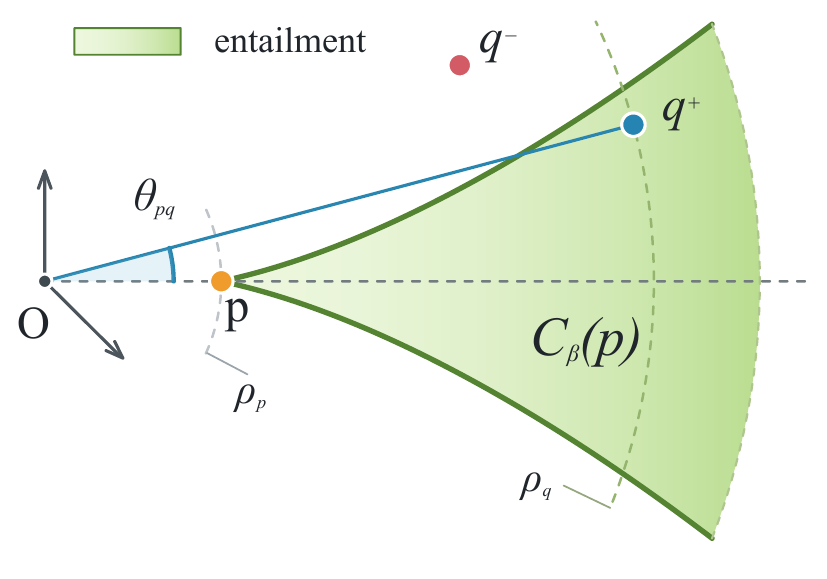}
  \caption{\textbf{RAOC.} Greater angular separation
  requires a larger radial gap.}
  \label{fig:raoc}
\end{wrapfigure}

The required radial gap grows with angular separation:
a larger directional difference requires the child to lie
farther outward from its parent (Figure ~\ref{fig:raoc}).
The fixed hyperparameter $\beta$ controls the angular
allowance at each gap, with larger values giving a looser
constraint.
RAOC therefore ties angular selectivity to the relative
parent--child radial gap rather than the absolute parent
radius.
It defines a partial order with nested descendant regions,
so a child's descendants are also descendants of its parent.
The formal statement and proof are given in
Appendix~\ref{app:raoc-proof}.

For optimization, we formulate the RAOC constraint in terms
of two bounded margins.
The radial margin measures radial precedence, while the
angular margin measures the admissible angular allowance
minus the observed angle.
We define
\begin{equation}
m_r(p,q)=
\operatorname{clip}\!\left(
\frac{g(p,q)}{\sigma_r},-1,1
\right),
\qquad
m_a(p,q)=
\frac{
\operatorname{clip}(\beta g(p,q),0,\pi)-\theta_{pq}
}{\pi}.
\label{eq:raoc-margins}
\end{equation}
Here $\sigma_r>0$ sets the radial margin scale, and
$\operatorname{clip}(t,a,b)=\min\{b,\max\{a,t\}\}$.
Both margins lie in $[-1,1]$.
For negative gaps, the angular allowance is clipped to zero,
while $m_r<0$ identifies the incorrect radial order even
when $\theta_{pq}=0$.

The entailment score is the smaller of the two margins:
\begin{equation}
D_{\mathrm{RAOC}}(p,q)=
\min\{m_r(p,q),m_a(p,q)\}.
\label{eq:raoc-score}
\end{equation}
Thus, $D_{\mathrm{RAOC}}(p,q)\ge0$ requires both radial
precedence and angular compatibility and is equivalent to
$q\in\mathcal C_\beta(p)$.
Further details are provided in
Appendix~\ref{app:raoc-score-note}.

\subsection{Hierarchy learning via contrastive entailment}
\label{sec:contrastive-entailment}

Hierarchical Entailment Contrastive (HEC) training reinforces annotated parent--child relations and suppresses spurious relations between nonmatching examples. For each relation type $U\!\to\!V\in\mathcal R$, $x_{U_i}$ and $x_{V_i}$ form a positive pair, while $x_{U_i}$ and $x_{V_j}$ with $i\ne j$ form a negative pair. Using the labels $y_{ij}$ defined in Sec.~\ref{sec:root-alignment}, we optimize
\begin{equation}
\mathcal L_{\mathrm{HEC}}=-\frac{1}{|\mathcal R|N}\sum_{U\to V\in\mathcal R}\sum_{i,j=1}^{N}\log\sigma\!\left(y_{ij}\left[a_eD_{\mathrm{RAOC}}(x_{U_i},x_{V_j})+b_e\right]\right),
\label{eq:hce-loss}
\end{equation}
where $a_e>0$ and $b_e$ are a learned scale and bias. This objective increases entailment compatibility for matching parent--descendant pairs and decreases it for nonmatching pairs.

Angular alignment leaves radial position unconstrained, while RAOC organizes representations through relative radial gaps. We stabilize their radial scale with the norm penalty $\ell_{\mathrm{norm}}(x)=\widetilde r_x^{\,2}-\log\widetilde r_x$, where $\widetilde r_x=\max\{\|x_s\|,\epsilon\}$ and $\epsilon>0$ keeps the logarithm finite. The quadratic and negative logarithmic terms penalize large and small spatial norms, respectively. We define $\mathcal L_{\mathrm{norm}}$ as the average of this penalty over all image, caption, crop, and phrase embeddings in the minibatch. The complete objective is
\begin{equation}
\mathcal L=\mathcal L_{\mathrm{align}}+\lambda_e\mathcal L_{\mathrm{HEC}}+\lambda_n\mathcal L_{\mathrm{norm}},
\label{eq:total}
\end{equation}
where $\lambda_e$ and $\lambda_n$ weight entailment learning and norm regularization. Implementation details are provided in Appendix~\ref{app:implementation}. Further details on hyperparameter selection and sensitivity analysis are provided in Appendix~\ref{app:hyperparameter-sensitivity}.

\endgroup
\section{Experiments}
\label{sec:experiments}

\subsection{Training Details}
\label{sec:training-details}

We pretrain AURORA on GRIT~\citep{peng2023kosmos2}, comprising 10.3M valid image--caption pairs and 17.8M grounded box--phrase annotations. We mainly report results using ViT-S/16, with complete comparisons for both ViT-S/16 and ViT-B/16 provided in Appendix~\ref{app:fulltables}. Training schedules, hyperparameters, and implementation details are given in Appendix~\ref{app:implementation}. We compare AURORA with CLIP~\citep{radford2021clip}, SigLIP~\citep{zhai2023siglip}, MERU~\citep{desai2023meru}, ATMG~\citep{ramasinghe2024accept}, HyCoCLIP~\citep{pal2025hycoclip}, PHyCLIP~\citep{yoshikawa2026phyclip}, and ARGENT~\citep{huynh2026argent}. For CLIP, SigLIP, and MERU, we include variants pretrained with and without box--phrase pairs to assess the contribution of additional grounded supervision. All comparisons are conducted under the same task-specific evaluation protocols. AURORA uses root-angle similarity $s_{\angle}$ for all similarity-based evaluations.

\subsection{Experimental Results}
\label{sec:experimental-results}

\paragraph{Zero-shot image classification.}
We evaluate zero-shot transfer on 16 datasets covering general, fine-grained, and specialized recognition tasks. As shown in Table~\ref{tab:classification-main}, AURORA achieves the highest accuracy on 13 datasets. Its advantage extends to comparisons with box-supervised CLIP, SigLIP, and MERU, suggesting that the improvement cannot be attributed solely to the inclusion of additional grounded training pairs.
\vspace{-8pt}
\begin{table}[H]
\centering
\caption{\textbf{Zero-shot image classification on ViT-S/16.} Results are reported as top-1 accuracy (\%). \textbf{Bold} indicates the best results. \underline{Underlining} denotes the second best.}

\label{tab:classification-main}
\vspace{-2pt}
\scriptsize
\setlength{\tabcolsep}{2.05pt}
\renewcommand{\arraystretch}{1.04}
\resizebox{\textwidth}{!}{%
\begin{tabular}{@{}l c *{16}{r}@{}}
\toprule
\multicolumn{2}{c}{\cellcolor{vitBand}\textbf{ViT-S/16}} &
\multicolumn{6}{c}{\cellcolor{generalHead}\textbf{General datasets}} &
\multicolumn{6}{c}{\cellcolor{fineHead}\textbf{Fine-grained datasets}} &
\multicolumn{4}{c}{\cellcolor{specialHead}\textbf{Specialized datasets}}\\
\cmidrule(lr){3-8}\cmidrule(lr){9-14}\cmidrule(lr){15-18}
\textbf{Model} & \rot{w/ boxes} &
\cellcolor{generalCell}\rot{ImageNet} &
\cellcolor{generalCell}\rot{CIFAR-10} &
\cellcolor{generalCell}\rot{CIFAR-100} &
\cellcolor{generalCell}\rot{SUN397} &
\cellcolor{generalCell}\rot{Caltech-101} &
\cellcolor{generalCell}\rot{STL-10} &
\cellcolor{fineCell}\rot{Food-101} &
\cellcolor{fineCell}\rot{CUB} &
\cellcolor{fineCell}\rot{Cars} &
\cellcolor{fineCell}\rot{Aircraft} &
\cellcolor{fineCell}\rot{Pets} &
\cellcolor{fineCell}\rot{Flowers} &
\cellcolor{specialCell}\rot{DTD} &
\cellcolor{specialCell}\rot{EuroSAT} &
\cellcolor{specialCell}\rot{RESISC45} &
\cellcolor{specialCell}\rot{Country211}\\
\midrule
CLIP & \xmark & 32.66&74.19&42.22&42.91&69.80&90.09&42.72&12.55&3.93&2.06&43.22&18.03&17.57&25.47&33.64&3.88\\
CLIP & \cmark & 32.49&73.69&42.15&42.66&70.07&89.83&42.50&12.48&4.01&2.10&43.28&17.79&17.61&26.04&33.37&3.84\\
SigLIP & \xmark & 34.30&66.57&38.38&44.73&71.42&91.75&38.98&13.33&4.89&2.13&46.65&\underline{23.17}&18.40&\textbf{32.63}&35.42&4.15\\
SigLIP & \cmark & 34.34&67.07&38.50&44.82&71.55&91.82&39.27&13.33&4.92&2.24&46.56&23.03&18.36&\underline{31.42}&35.37&\underline{4.23}\\
MERU & \xmark & 31.74&76.08&41.24&41.28&67.76&87.24&41.09&11.97&3.80&\underline{2.88}&42.53&16.75&17.46&27.75&33.88&3.82\\
MERU & \cmark & 32.01&76.55&41.35&41.21&67.64&87.30&41.24&12.08&3.78&\textbf{2.95}&42.54&16.57&17.52&27.53&33.82&3.83\\
ATMG & \xmark & 33.74&77.21&43.28&43.15&68.08&90.84&44.01&13.27&4.38&1.75&42.53&15.70&17.89&28.69&35.75&4.21\\
HyCoCLIP & \cmark & 36.38&83.28&47.91&47.88&71.53&93.16&46.73&15.33&5.98&2.45&45.59&20.89&19.47&26.00&38.73&4.09\\
PHyCLIP & \cmark & 37.06&83.44&48.41&48.88&71.82&93.06&47.26&16.29&6.58&2.57&46.44&21.46&19.13&25.81&39.79&3.95\\
ARGENT & \cmark & \underline{37.55}&\underline{83.55}&\underline{48.62}&\underline{49.29}&\underline{72.15}&\textbf{94.01}&\underline{47.70}&\underline{16.90}&\underline{6.81}&2.48&\underline{47.10}&21.65&\underline{19.67}&26.32&\underline{40.46}&4.19\\
\rowcolor{oursRow}\textbf{AURORA} & \cmark & \textbf{41.54}&\textbf{84.25}&\textbf{51.20}&\textbf{50.27}&\textbf{75.81}&\underline{93.19}&\textbf{50.13}&\textbf{17.71}&\textbf{8.91}&1.80&\textbf{51.41}&\textbf{28.99}&\textbf{22.87}&30.18&\textbf{41.09}&\textbf{4.70}\\
\bottomrule
\end{tabular}}
\end{table}
\vspace{-8pt}
\paragraph{Image--text retrieval and hierarchical classification.}
We evaluate bidirectional image--text retrieval on COCO~\citep{lin2014coco} and Flickr30K~\citep{young2014flickr}, together with hierarchical classification on ImageNet~\citep{deng2009imagenet}. The latter uses the WordNet taxonomy to assess how closely predictions agree with ground-truth labels in the hierarchy. Table~\ref{tab:retrieval-hierarchy-main} shows that AURORA achieves the best results in both retrieval directions and across all reported hierarchical metrics. Thus, improved cross-modal matching is accompanied by stronger agreement with the label hierarchy. We follow the evaluation protocols of HyCoCLIP and PHyCLIP, with metric definitions detailed in Appendix~\ref{app:protocols}.

\paragraph{Compositional understanding.}
We evaluate compositional understanding on VL-CheckList-Object~\citep{zhao2022vlchecklist} and SugarCrepe~\citep{hsieh2023sugarcrepe}, where models must distinguish a correct caption from a hard negative containing altered semantic content. VL-CheckList-Object tests sensitivity to object replacement across different object locations and sizes. SugarCrepe extends the evaluation to seven types of perturbations involving objects, attributes, and relations.

As shown in Table~\ref{tab:composition-main}, AURORA leads on nine of the fourteen reported measures, including the overall SugarCrepe score. The most pronounced gains occur on VL-CheckList-Object, suggesting more reliable identification of objects across variations in location and scale. The best overall performance on SugarCrepe further indicates improved discrimination of compositional mismatches beyond object replacement. 
\vspace{-8pt}
\begin{table}[H]
\centering
\caption{\textbf{Zero-shot retrieval and hierarchical classification on ViT-S/16.} Retrieval performance is reported as Recall@5/10 (\%). Hierarchical metrics are shown to two decimal places, with rankings determined before rounding. \textbf{Bold} indicates the best results. \underline{Underlining} denotes the second best.}

\label{tab:retrieval-hierarchy-main}
\vspace{-4pt}
\scriptsize
\setlength{\tabcolsep}{2.55pt}
\renewcommand{\arraystretch}{1.05}

\resizebox{\linewidth}{!}{%
\begin{tabular}{@{}l c *{13}{c}@{}}
\toprule

\multicolumn{2}{c}{\cellcolor{vitBand}\textbf{ViT-S/16}} &
\multicolumn{4}{c}{\cellcolor{generalHead}\textbf{Text $\rightarrow$ Image}} &
\multicolumn{4}{c}{\cellcolor{fineHead}\textbf{Image $\rightarrow$ Text}} &
\multicolumn{5}{c}{\cellcolor{specialHead}\textbf{Hierarchical Classification}}\\

\cmidrule(lr){3-6}
\cmidrule(lr){7-10}
\cmidrule(lr){11-15}

\textbf{Model} & &
\multicolumn{2}{c}{\cellcolor{generalCell}\textbf{COCO}} &
\multicolumn{2}{c}{\cellcolor{generalCell}\textbf{Flickr}} &
\multicolumn{2}{c}{\cellcolor{fineCell}\textbf{COCO}} &
\multicolumn{2}{c}{\cellcolor{fineCell}\textbf{Flickr}} &
\multicolumn{5}{c}{\cellcolor{specialCell}\textbf{WordNet}}\\

& \raisebox{0pt}[0pt][0pt]{\shortstack[c]{w/\\boxes}} &
R@5 & R@10 & R@5 & R@10 &
R@5 & R@10 & R@5 & R@10 &
TIE$\downarrow$ & LCA$\downarrow$ &
$J\uparrow$ & $P_H\uparrow$ & $R_H\uparrow$\\

\midrule

CLIP & \xmark
&50.31&61.52&76.98&84.48&65.75&76.15&87.10&93.00
&4.43&2.51&0.72&0.79&0.80\\

CLIP & \cmark
&50.27&61.47&76.75&84.42&65.49&76.08&87.20&93.17
&4.42&2.50&0.72&0.78&0.80\\

SigLIP & \xmark
&52.32&63.90&77.54&85.24&65.98&76.78&\underline{88.60}&93.23
&4.22&2.47&0.75&0.83&0.82\\

SigLIP & \cmark
&52.27&63.81&77.21&84.09&\underline{66.03}&\underline{76.84}&\underline{88.60}&\underline{93.40}
&4.23&2.47&0.74&0.83&0.83\\

MERU & \xmark
&49.79&61.14&76.03&84.34&65.01&75.67&86.80&92.77
&4.44&2.52&0.71&0.78&0.79\\

MERU & \cmark
&49.63&61.08&76.13&84.21&65.16&75.74&86.93&92.77
&4.44&2.51&0.71&0.78&0.78\\

ATMG & \xmark
&50.95&62.36&77.21&84.74&64.96&75.81&87.10&93.17
&4.37&2.48&0.73&0.82&0.81\\

HyCoCLIP & \cmark
&51.35&63.13&77.92&85.52&64.84&75.78&87.30&93.23
&3.97&2.33&0.76&0.84&0.84\\

PHyCLIP & \cmark
&52.24&64.08&78.26&85.84&65.65&76.21&87.20&93.17
&3.92&2.31&0.76&\underline{0.84}&\underline{0.84}\\

ARGENT & \cmark
&\underline{52.68}&\underline{64.46}&\underline{78.59}&\underline{86.18}
&65.75&76.46&87.50&93.23
&\underline{3.91}&\underline{2.30}&\underline{0.77}&\underline{0.84}&\underline{0.84}\\

\rowcolor{oursRow}
\textbf{AURORA} & \cmark
&\textbf{55.77}&\textbf{66.81}&\textbf{81.02}&\textbf{88.18}
&\textbf{69.42}&\textbf{79.18}&\textbf{89.90}&\textbf{95.13}
&\textbf{3.62}&\textbf{2.25}&\textbf{0.78}&\textbf{0.85}&\textbf{0.85}\\

\bottomrule
\end{tabular}%
}
\end{table}

\vspace{-8pt}
\begin{table}[H]
\centering
\caption{\textbf{Hard-negative compositional classification on ViT-S/16.} Models are evaluated on VL-CheckList-Object and SugarCrepe. \textbf{Bold} indicates the best results. \underline{Underlining} denotes the second best.}

\label{tab:composition-main}
\vspace{-2pt}
\scriptsize
\setlength{\tabcolsep}{2.65pt}
\renewcommand{\arraystretch}{1.06}

\resizebox{\textwidth}{!}{%
\begin{tabular}{@{}*{16}{c}@{}}
\toprule

\multicolumn{2}{c}{\cellcolor{vitBand}\textbf{ViT-S/16}} &
\multicolumn{6}{c}{\cellcolor{generalHead}\textbf{VL-CheckList-Object}} &
\multicolumn{8}{c}{\cellcolor{fineHead}\textbf{SugarCrepe}}\\

\cmidrule(lr){3-8}
\cmidrule(lr){9-16}

\textbf{Model} & &
\multicolumn{3}{c}{\cellcolor{generalCell}\textbf{Location}} &
\multicolumn{3}{c}{\cellcolor{generalCell}\textbf{Size}} &
\multicolumn{3}{c}{\cellcolor{fineCell}\textbf{Replace}} &
\multicolumn{2}{c}{\cellcolor{fineCell}\textbf{Swap}} &
\multicolumn{3}{c}{\cellcolor{fineCell}\textbf{Add}}\\

& \raisebox{0pt}[0pt][0pt]{\shortstack[c]{w/\\boxes}} &
Center & Mid & Margin &
Large & Medium & Small &
Obj & Att & Rel &
Obj & Att &
Obj & Att & Overall\\

\midrule

CLIP & \xmark
&85.10&79.35&71.22&84.48&74.61&65.72
&86.74&78.00&69.04&59.49&65.27&77.53&70.76&75.71\\

CLIP & \cmark
&85.24&78.41&70.55&85.32&75.02&64.67
&85.94&77.45&\textbf{69.94}&59.89&64.41&77.90&70.76&75.69\\

SigLIP & \xmark
&89.33&83.04&74.91&88.30&78.50&68.82
&87.35&78.55&\underline{69.49}&\underline{60.82}&\underline{65.77}&78.08&71.10&76.26\\

SigLIP & \cmark
&89.83&83.03&74.50&88.13&78.13&69.47
&87.01&78.43&69.18&\textbf{61.38}&\textbf{66.67}&78.03&70.28&76.12\\

MERU & \xmark
&82.07&76.10&67.93&81.47&71.50&62.40
&86.40&77.58&68.73&58.94&64.96&77.17&70.33&75.35\\

MERU & \cmark
&82.90&75.40&68.40&81.33&71.73&62.40
&87.09&77.03&69.11&58.81&65.17&77.19&70.47&75.55\\

ATMG & \xmark
&84.07&78.40&69.83&83.53&73.60&64.53
&87.11&78.34&69.35&60.16&65.62&77.98&71.15&76.10\\

HyCoCLIP & \cmark
&90.80&86.75&79.95&90.52&82.73&74.23
&87.23&78.81&66.64&60.41&64.86&79.44&69.08&75.82\\

PHyCLIP & \cmark
&91.40&87.14&80.42&90.97&83.06&74.79
&88.40&80.25&67.33&59.76&65.22&79.89&71.48&76.71\\

ARGENT & \cmark
&\underline{91.87}&\underline{87.73}&\textbf{81.10}&\textbf{91.33}&\underline{84.07}&\underline{75.60}
&\underline{88.92}&\underline{80.88}&67.71&59.89&65.32&\underline{80.00}&\underline{72.59}&\underline{77.11}\\

\rowcolor{oursRow}
\textbf{AURORA} & \cmark
&\textbf{91.93}&\textbf{87.80}&\underline{80.77}&\underline{91.30}&\textbf{84.69}&\textbf{76.53}
&\textbf{89.77}&\textbf{81.85}&68.42&60.00&65.47&\textbf{80.16}&\textbf{74.27}&\textbf{77.75}\\

\bottomrule
\end{tabular}%
}
\end{table}

\vspace{-8pt}
\subsection{Geometric Visualizations}
\label{sec:visualizations}

We inspect AURORA's learned geometry from two complementary
perspectives: spatial-norm distributions reveal radial allocation
across four representation types, while
HoroPCA~\citep{chami2021horopca} provides a low-dimensional view
of their organization.
Figure~\ref{fig:geometry-visualization} compares AURORA ViT-S
with its w/o cone variant.
Without RAOC, full inputs and their corresponding boxes have
largely overlapping norm distributions within each modality.
RAOC produces more distinct distributions, particularly for
texts and text boxes, indicating stronger radial differentiation.
In the HoroPCA projections, the w/o cone variant exhibits
a narrow text configuration and a compact visual cluster.
With RAOC, both modalities occupy broader, more structured
regions, with text and image boxes tending to lie closer
to the projected origin than their full-input counterparts.
This projected arrangement is consistent with the intended
ordering from general concepts to contextualized instances.

\begin{figure}[t]
    \centering
    \input{figures/fig_geometry_placeholder}
    \caption{\textbf{Representation geometry with and without RAOC.}
    (a) Spatial-norm distributions and HoroPCA projections
    (b) without and (c) with RAOC for AURORA ViT-S.
    RAOC yields more distinct norm distributions and broader
    projected organization, with boxes tending to lie nearer
    the projected origin than full inputs.}
    \label{fig:geometry-visualization}
\end{figure}

We further examine local hierarchical organization through
radial gaps and root angles.
Figure~\ref{fig:hierarchy-visualization} compares AURORA ViT-S
with and without RAOC for the text query ``bag.''
With RAOC, the image-region tree forms distinct branches:
root angles decrease along the path from the full image
through ``mat bag'' to ``bag,'' while the umbrella--chair
branch remains less aligned with the query.
Without RAOC, the branches are less clearly distinguished.
These observations suggest that RAOC helps capture
local tree-like semantic structure.

\begin{figure}[t]
    \centering
    \input{figures/fig_hierarchy_placeholder}
    \caption{\textbf{Local hierarchical organization with and
    without RAOC.}
    AURORA ViT-S (a) w/o cone and (b) w/ cone for the text
    query ``bag.''
    Scatter plots show radial gaps and root angles for visual
    regions, semantic siblings, and unrelated controls;
    the corresponding visual hierarchy is shown on the right.
    RAOC produces clearer query-relative organization along
    the relevant hierarchy branch.}
    \label{fig:hierarchy-visualization}
    \vspace{-8pt}
\end{figure}

\vspace{-5pt}

\subsection{Ablation Study}
\label{sec:ablation}

Table~\ref{tab:ablation-main} isolates the main components of AURORA. Replacing RAOC with conventional Entailment Cone (EC) compares alternative order regions; removing the norm regularizer tests radial-scale control; removing RAOC isolates the effect of directed hierarchy supervision; and removing both RAOC and norm regularization leaves the root-angle alignment alone. Replacing the pairwise sigmoid loss with InfoNCE further examines the contrastive loss term under different alignment scores. With $-d_{\mathbb H}$, positive pairs are still encouraged toward smaller absolute distance, retaining the radial pressure discussed in Sec.~\ref{sec:analysis}. Appendix~\ref{app:sigmoid-lorentz} provides the corresponding gradient analysis.

\begin{table}[t]

\centering

\caption{\textbf{Ablation study of AURORA-ViT-S/16 pre-trained on grounded GRIT,} evaluated on image classification, COCO retrieval, and hierarchical classification. Controlled variants isolate the contribution of individual training components and loss terms.}

\label{tab:ablation-main}

\vspace{-8pt}

\scriptsize

\setlength{\tabcolsep}{3.2pt}

\renewcommand{\arraystretch}{1.06}

\resizebox{0.98\textwidth}{!}{%

\begin{tabular}{@{}c l l c l *{6}{c}@{}}

\toprule

\multicolumn{5}{c}{\cellcolor{vitBand}\textbf{AURORA-ViT-S/16}} &

\multicolumn{2}{c}{\cellcolor{generalHead}\textbf{Classification}} &

\multicolumn{2}{c}{\cellcolor{fineHead}\textbf{COCO R@5}} &

\multicolumn{2}{c}{\cellcolor{specialHead}\textbf{Hierarchy}}\\

\cmidrule(lr){6-7}\cmidrule(lr){8-9}\cmidrule(lr){10-11}

\textbf{ID} & \textbf{Alignment score} & \textbf{Order objective} & \textbf{Norm} & \textbf{Loss} &

\cellcolor{generalCell}ImageNet & \cellcolor{generalCell}Food-101 &

\cellcolor{fineCell}T$\rightarrow$I & \cellcolor{fineCell}I$\rightarrow$T &

\cellcolor{specialCell}TIE$\downarrow$ & \cellcolor{specialCell}$J\uparrow$\\

\midrule

(a) & root angle & EC & \cmark & Sigmoid
& 38.12 & 45.92 & 53.48 & 66.72 & 3.79 & 0.76\\

(b) & root angle & RAOC & \xmark & Sigmoid
& 40.21 & 48.57 & 54.98 & 68.41 & \underline{3.70} & \underline{0.77}\\

(c) & root angle & RAOC & \cmark & InfoNCE
& 39.42 & 47.71 & 54.43 & 67.93 & 3.76 & 0.76\\

(d) & root angle & none & \cmark & Sigmoid
& \underline{40.27} & 47.06 & \underline{55.29} & \underline{68.92} & 3.74 & \underline{0.77}\\

(e) & root angle & none & \xmark & Sigmoid
& 38.91 & 45.38 & 54.56 & 67.88 & 3.85 & 0.76\\

(f) & $-d_{\mathbb H}$ & EC & \cmark & Sigmoid
& 37.32 & 44.63 & 52.66 & 65.84 & 3.85 & 0.75\\

(g) & $-d_{\mathbb H}$ & EC & \cmark & InfoNCE
& 36.05 & 43.24 & 51.79 & 64.87 & 3.94 & 0.74\\

\midrule

\rowcolor{oursRow}
\textbf{Full} & root angle & RAOC & \cmark & Sigmoid
& \textbf{41.54} & \textbf{50.13}
& \textbf{55.77} & \textbf{69.42}
& \textbf{3.62} & \textbf{0.78}\\

\bottomrule

\end{tabular}}

\end{table}

\section{Related Work}
\label{sec:related}

\paragraph{Vision and language representation learning.}
CLIP~\citep{radford2021clip} and ALIGN~\citep{jia2021align} learn transferable visual and textual representations through contrastive learning on image and text pairs. SigLIP~\citep{zhai2023siglip} replaces softmax normalization with independent sigmoid losses over pairs. RegionCLIP~\citep{zhong2022regionclip} extends alignment to image regions and textual concepts. AURORA retains separate image and text encoders and examines how semantic alignment interacts with hierarchical supervision.

\paragraph{Hyperbolic vision and language representations.}
Hyperbolic embeddings capture hierarchical structure in language and vision~\citep{nickel2017poincare,khrulkov2020hyperbolic}. MERU combines Lorentz representations with entailment constraints~\citep{desai2023meru}. HyCoCLIP incorporates object regions and grounded phrases through compositional contrastive and entailment learning~\citep{pal2025hycoclip}. PHyCLIP uses an $\ell_1$ product of hyperbolic factors to model hierarchy and composition~\citep{yoshikawa2026phyclip}. Wu et al.~\citep{wu2026alignment} align visual and textual trees across hyperbolic manifolds with different curvatures. ATMG addresses the tension between distance alignment and entailment through an exterior angle contrastive objective~\citep{ramasinghe2024accept}. AURORA separates root angle alignment from radial ordering to preserve differences in specificity.

\paragraph{Hierarchical representations and entailment regions.}
Hierarchical embeddings encode asymmetric relations through coordinatewise partial orders~\citep{vendrov2016order}. Hyperbolic entailment cones define nested descendant regions~\citep{ganea2018cones} and support concept hierarchy induction from text~\citep{le2019inferring}. Shadow Cones~\citep{yu2024shadow} generalizes entailment regions through umbral and penumbral constructions based on geometric occlusion. ARGENT~\citep{huynh2026argent} combines adaptive entailment supervision with norm regularization to stabilize hierarchical representations. AURORA defines descendant membership through relative radial separation and root angle difference, with contrastive supervision over matched and unmatched pairs.

\section{Conclusion}

We introduced AURORA, a hyperbolic vision--language pretraining framework that separates angular semantic alignment from radial hierarchical ordering. Our analysis shows that distance minimization can conflict with hierarchical separation, while semantic discrimination alone does not ensure valid entailment. AURORA combines angular alignment with the Radial--Angular Order Cone, which defines a partial order through radial gaps and angular differences. Hierarchical entailment contrastive training reinforces annotated relations and suppresses spurious entailment, while norm regularization stabilizes training. Experiments demonstrate improvements over competitive Euclidean and hyperbolic baselines across classification, retrieval, hierarchical classification, and compositional understanding. Future work could discover richer hierarchical relations within and across image--text examples and explore how to learn from them with fewer explicit annotations.
\subsection*{AI Use Statement}

Generative AI tools were used for literature retrieval, language editing,
LaTeX assistance, and discussions of research ideas and experimental design.
They were also used as a supporting tool during the verification and
organization of mathematical derivations. All formulations,
implementations, experiments, proofs, and scientific claims were developed,
verified, and validated by the authors, who take full responsibility for
the final content of this manuscript.

\subsection*{Reproducibility Statement}

The main paper presents the objectives, methodology, and key experimental
findings. Appendices~\ref{app:implementation}, \ref{app:protocols},
and related supplementary sections provide implementation details,
training configurations, evaluation protocols, mathematical derivations,
and computational settings. We will release code, configurations, data
processing details, evaluation resources, and experiment outputs to
facilitate reproduction.
\bibliography{references}
\bibliographystyle{iclr2027_conference}

\clearpage
\appendix
\section{Notation and Equivalent Geometric Views}
\label{app:notation}

\begin{table}[H]
\centering
\caption{Core notation used in the geometric analysis.}
\label{tab:notation}
\small
\resizebox{\textwidth}{!}{%
\begin{tabular}{lll}
\toprule
Symbol & Meaning & Role \\
\midrule
$c$ & positive curvature magnitude; manifold curvature is $-c$ & learned global scalar \\
$\tilde{\mathbf{x}}\in\mathbb R^d$ & spatial component of a Lorentz point $\mathbf{x}$ & Lorentz coordinate \\
$r_x=\|\tilde{\mathbf{x}}\|$ & Lorentz spatial norm & radial regularization variable \\
$\rho_x=\sqrt c\,d_c(O,\mathbf{x})$ & scaled distance from the origin & radial order coordinate \\
$\mathbf{u}_x=\tilde{\mathbf{x}}/\|\tilde{\mathbf{x}}\|$ & spatial direction of non-origin $\mathbf{x}$ & angular coordinate \\
$\mathbf{w}_x=\Phi_c(\mathbf{x})$ & Poincar\'e coordinate of $\mathbf{x}$ & equivalent coordinate view \\
$\theta_{xy}=\angle \mathbf{x}O\mathbf{y}$ & root angle & directional difference \\
$g(\mathbf{p},\mathbf{q})=\rho_q-\rho_p$ & parent--candidate radial gap & radial precedence \\
$f_\beta(g)=\min\{\pi,\beta g\}$ & admissible angle for $g\ge0$ & RAOC aperture \\
$m_r,m_a$ & bounded radial and angular margins & optimization margins \\
$D_{\mathrm{RAOC}}=\min\{m_r,m_a\}$ & bounded entailment compatibility & training score \\
$\beta$ & radial-gap-to-angle slope & order geometry \\
$\sigma_r$ & radial margin scale & score scaling \\
$a_e,b_e$ & entailment-logit scale and bias & score calibration \\
\bottomrule
\end{tabular}}
\vspace{-5pt}
\end{table}

\subsection{Radial--angular coordinates and root-angle identities}
\label{app:rootangle}
\label{app:model-conventions}

Throughout the appendix we use the radial--angular representation induced by the Lorentz origin. We write Lorentz vectors as $\mathbf{x}=[x_0;\tilde{\mathbf{x}}]\in\mathbb R^{d+1}$, where $x_0\in\mathbb R$ and $\tilde{\mathbf{x}}\in\mathbb R^d$; the semicolon denotes vertical concatenation.  For any non-origin $\mathbf{x}\in\Hc$, the manifold constraint together with $\rho_x=\sqrt c\,d_c(O,\mathbf{x})$ gives
\begin{equation}
 \mathbf{x}=
 \frac{1}{\sqrt c}
 \bigl[\cosh\rho_x;\,\sinh\rho_x\,\mathbf{u}_x\bigr],
 \qquad
 \mathbf{u}_x=\frac{\tilde{\mathbf{x}}}{\|\tilde{\mathbf{x}}\|}\in\mathbb S^{d-1}.
\label{eq:app-polar}
\end{equation}
Equivalently,
\begin{equation}
 r_x=\frac{\sinh\rho_x}{\sqrt c},
 \qquad
 \rho_x=\operatorname{asinh}(\sqrt c\,r_x).
\label{eq:app-radial-conversion}
\end{equation}
For the origin-exponential parameterization in Eq.~\eqref{eq:expmap-origin}, this simplifies to
$\rho_x=\sqrt c\,\|\mathbf{z}_x\|$ and $\mathbf{u}_x=\mathbf{z}_x/\|\mathbf{z}_x\|$ whenever $\mathbf{z}_x\neq\mathbf{0}_d$.
Consequently, for non-origin $\mathbf{x}$ and $\mathbf{y}$,
\begin{equation}
\cos\theta_{xy}
=
 \mathbf{u}_x^\top \mathbf{u}_y
=
\frac{\tilde{\mathbf{x}}^\top \tilde{\mathbf{y}}}{\|\tilde{\mathbf{x}}\|\,\|\tilde{\mathbf{y}}\|}
=
\frac{\cosh\rho_x\cosh\rho_y-\cosh\delta_{xy}}
{\sinh\rho_x\sinh\rho_y},
\qquad
\delta_{xy}=\sqrt c\,d_c(\mathbf{x},\mathbf{y}).
\label{eq:app-root-identities}
\end{equation}
The first equality gives the score used by AURORA, and the last follows directly from the hyperbolic cosine law.  The resulting semantic score is invariant to radial position and varies with root direction.

\subsection{Poincar\'e realization of root-angle alignment and RAOC}
\label{app:poincare-equivalence}

A natural question is how the same geometry appears when the embeddings are parameterized directly in the Poincar\'e ball.  For curvature $-c$,
\begin{equation}
\mathbb B_c^d
=
\{\mathbf{w}\in\mathbb R^d:c\|\mathbf{w}\|^2<1\},
\qquad
g_w^{\mathbb B}
=
\left(\frac{2}{1-c\|\mathbf{w}\|^2}\right)^2 I .
\label{eq:app-poincare-metric}
\end{equation}
The metric is conformal to the Euclidean metric, so Riemannian angles and Euclidean coordinate angles agree at each ball point~\citep{nickel2017poincare,ganea2018cones}.  The origin-preserving Lorentz--Poincar\'e isometry is
\begin{equation}
\Phi_c(\mathbf{x})
=
\frac{\tilde{\mathbf{x}}}{\sqrt c\,x_0+1}.
\label{eq:app-lorentz-to-poincare}
\end{equation}
Substituting Eq.~\eqref{eq:app-polar} gives
\begin{equation}
\mathbf{w}_x=\Phi_c(\mathbf{x})
=
\frac{1}{\sqrt c}\tanh\!\left(\frac{\rho_x}{2}\right)\mathbf{u}_x,
\qquad
\rho_x
=
2\operatorname{artanh}(\sqrt c\,\|\mathbf{w}_x\|),
\qquad
\frac{\mathbf{w}_x}{\|\mathbf{w}_x\|}=\mathbf{u}_x.
\label{eq:app-poincare-polar}
\end{equation}

\paragraph{Root-angle alignment.}
The root angle is the intrinsic angle between the two radial geodesics leaving the origin.  The isometry $\Phi_c$ preserves this angle and maps the Lorentz origin $O$ to the ball origin $0$.  In the Poincar\'e ball, radial geodesics through $0$ are Euclidean radial segments; conformality then identifies their intrinsic angle with the ordinary Euclidean polar angle.  For corresponding non-origin points $\mathbf{w}_x=\Phi_c(\mathbf{x})$ and $\mathbf{w}_y=\Phi_c(\mathbf{y})$,
\begin{equation}
\begin{aligned}
\theta_{xy}^{\mathbb L}
&=
\angle_O^{\mathbb L}(\mathbf{x},\mathbf{y})
=
\angle_0^{\mathbb B}(\mathbf{w}_x,\mathbf{w}_y) \\
&=
\arccos\!\left(
\frac{\mathbf{w}_x^\top \mathbf{w}_y}{\|\mathbf{w}_x\|\,\|\mathbf{w}_y\|}
\right),
\end{aligned}
\label{eq:app-root-angle-model-equivalence}
\end{equation}
and therefore
\begin{equation}
s_{\angle}^{\mathbb L}(\mathbf{x},\mathbf{y})
=
\frac{\tilde{\mathbf{x}}^\top \tilde{\mathbf{y}}}{\|\tilde{\mathbf{x}}\|\,\|\tilde{\mathbf{y}}\|}
=
\frac{\mathbf{w}_x^\top \mathbf{w}_y}{\|\mathbf{w}_x\|\,\|\mathbf{w}_y\|}
=
s_{\angle}^{\mathbb B}(\mathbf{w}_x,\mathbf{w}_y).
\label{eq:app-root-poincare-equivalence}
\end{equation}
A Poincar\'e implementation can consequently optimize root-angle alignment with the Euclidean cosine between normalized ball coordinates.  The Lorentz spatial-direction cosine and the Poincar\'e polar-angle cosine are two coordinate expressions of the same intrinsic root angle.

\paragraph{RAOC in Poincar\'e coordinates.}
The isometry also preserves distance from the origin.  Define the scaled Poincar\'e radial depth
\begin{equation}
\rho_{\mathbb B}(\mathbf{w})
=
\sqrt c\,d_{\mathbb B,c}(0,\mathbf{w})
=
2\operatorname{artanh}(\sqrt c\,\|\mathbf{w}\|).
\label{eq:app-poincare-radius}
\end{equation}
Then $\rho_{\mathbb B}(\mathbf{w}_x)=\rho_x$, and Eq.~\eqref{eq:app-raoc-set} has the equivalent ball form
\begin{equation}
\begin{aligned}
\mathbf{q}\in\mathcal C_\beta(\mathbf{p})
\quad\Longleftrightarrow\quad
&\rho_{\mathbb B}(\mathbf{w}_q)-\rho_{\mathbb B}(\mathbf{w}_p)\ge0,\\
&\theta_{\mathbb B}(\mathbf{w}_p,\mathbf{w}_q)
\le
\beta\bigl[\rho_{\mathbb B}(\mathbf{w}_q)-\rho_{\mathbb B}(\mathbf{w}_p)\bigr],
\end{aligned}
\label{eq:app-raoc-poincare}
\end{equation}
where $\theta_{\mathbb B}$ is the Euclidean polar angle from Eq.~\eqref{eq:app-root-angle-model-equivalence}.  The order is therefore expressed by the same radial gap and the same direction-sphere angle in either hyperbolic model.  The partial-order and descendant-set nesting results below carry directly to the Poincar\'e representation.

For a fixed parent $\mathbf{w}_p$, Eq.~\eqref{eq:app-raoc-poincare} can also be written in the Euclidean ball radius:
\begin{equation}
\|\mathbf{w}_q\|
\ge
\frac{1}{\sqrt c}
\tanh\!\left[
\operatorname{artanh}(\sqrt c\,\|\mathbf{w}_p\|)
+
\frac{\theta_{\mathbb B}(\mathbf{w}_p,\mathbf{w}_q)}{2\beta}
\right].
\label{eq:app-raoc-poincare-boundary}
\end{equation}
The linear RAOC boundary in $(g,\theta)$ coordinates thus appears as a curved radial--angular contour when expressed with the Euclidean ball radius.  The angular coordinate remains the Euclidean polar angle, while radial order is measured by the intrinsic depth $\rho_{\mathbb B}$.

When an encoder feature is inserted through the origin exponential map, the tangent-coordinate conventions differ by a constant scale.  Since $g_0^{\mathbb B}=4I$,
\begin{equation}
(\mathrm d\Phi_c)_O\!\left([0;\mathbf{z}]\right)=\mathbf{z}/2,
\qquad
\Phi_c\!\left(\exp_O^c\!\left([0;\mathbf{z}]\right)\right)
=
\frac{\tanh(\sqrt c\|\mathbf{z}\|/2)}{\sqrt c}\frac{\mathbf{z}}{\|\mathbf{z}\|}.
\label{eq:app-origin-coordinate-scale}
\end{equation}
Thus the Lorentz tangent coordinate $\mathbf{z}$ and the Poincar\'e coordinate tangent vector $\mathbf{z}/2$ parameterize the corresponding manifold point with the same normalized direction.  Coordinate-wise optimization trajectories can vary with parameterization and optimizer choice.  At the geometric level, corresponding manifold configurations receive identical root-angle scores and RAOC membership decisions under $\Phi_c$.

\section{Geometric Analysis of Alignment and Entailment}
\label{app:proofs}

\subsection{Distance alignment and conventional cone geometry}
\label{app:alignment-analysis}

\paragraph{Radial pressure from pairwise distance.}
Let $\delta_{pq}=\sqrt c\,d_c(\mathbf{p},\mathbf{q})$.
Starting from Eq.~\eqref{eq:lawcos}, fix $\rho_p>0$
and $\theta_{pq}$, with $\mathbf{p}\neq \mathbf{q}$ and $\rho_q\ge\rho_p$.
Differentiating with respect to $\rho_q$ gives
\begin{equation}
\sinh\delta_{pq}\,
\frac{\partial\delta_{pq}}{\partial\rho_q}
=
\cosh\rho_p\cosh\rho_q
\left(
\tanh\rho_q-\tanh\rho_p\cos\theta_{pq}
\right).
\label{eq:app-distance-derivative}
\end{equation}
The factor in parentheses can be decomposed as
\[
(\tanh\rho_q-\tanh\rho_p)
+
\tanh\rho_p(1-\cos\theta_{pq}).
\]
Both terms are nonnegative, and simultaneous equality would
imply $\rho_q=\rho_p$ and $\theta_{pq}=0$, hence $\mathbf{p}=\mathbf{q}$.
Since $\delta_{pq}>0$, it follows that
$\partial\delta_{pq}/\partial\rho_q>0$.
Holding the parent and both directions fixed, decreasing the
outward child's radius toward the parent's therefore decreases
their distance.

Moreover, as their directions align,
\begin{equation}
\lim_{\theta_{pq}\to0}d_c(\mathbf{p},\mathbf{q})
=
\frac{\rho_q-\rho_p}{\sqrt c},
\qquad \rho_q>\rho_p.
\label{eq:aligned-distance-limit}
\end{equation}
Thus, even at zero angular separation, distance minimization
continues to penalize the parent--child radial gap.

\paragraph{Pairwise distance does not determine cone membership.}
Let $\mathcal C_{\mathrm{EC}}(\mathbf{p})$ denote an outward
conventional Entailment Cone (EC) rooted at a non-origin parent $\mathbf{p}$.
Its half-aperture $\omega(\mathbf{p})$ specifies the maximum permitted
angle from the outward continuation of the radial geodesic $O\mathbf{p}$.
For $\mathbf{q}\neq \mathbf{p}$, let $\varphi(\mathbf{p},\mathbf{q})$ be the angle at $\mathbf{p}$
between this outward direction and the geodesic from $\mathbf{p}$ to $\mathbf{q}$.
Then $\mathbf{q}\in\mathcal C_{\mathrm{EC}}(\mathbf{p})$ if and only if
$\varphi(\mathbf{p},\mathbf{q})\le\omega(\mathbf{p})$.

\begin{proposition}[Distance and cone membership]
\label{prop:distance-membership}
Suppose $\mathcal C_{\mathrm{EC}}(\mathbf{p})$ has half-aperture
$\omega(\mathbf{p})\in(0,\pi/2]$.
For distinct non-origin embeddings $\mathbf{p}$ and $\mathbf{q}$:
\begin{enumerate}[leftmargin=1.4em,itemsep=0pt,topsep=2pt]
    \item If $\rho_p=\rho_q$ and $\theta_{pq}>0$, then
    $\varphi(\mathbf{p},\mathbf{q})>\pi/2$ and
    $\mathbf{q}\notin\mathcal C_{\mathrm{EC}}(\mathbf{p})$,
    even though $d_c(\mathbf{p},\mathbf{q})\to0$ as $\theta_{pq}\to0$.
    \item If $\theta_{pq}=0$ and $\rho_q>\rho_p$, then
    $\varphi(\mathbf{p},\mathbf{q})=0$ and
    $\mathbf{q}\in\mathcal C_{\mathrm{EC}}(\mathbf{p})$,
    while $d_c(\mathbf{p},\mathbf{q})=(\rho_q-\rho_p)/\sqrt c$
    grows without bound as $\rho_q$ increases.
\end{enumerate}
\end{proposition}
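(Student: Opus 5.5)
The plan is to compute the angle $\varphi(\mathbf{p},\mathbf{q})$ in the geodesic triangle $O\mathbf{p}\mathbf{q}$ using hyperbolic trigonometry. The outward continuation of $O\mathbf{p}$ at $\mathbf{p}$ makes angle $\pi$ with the direction from $\mathbf{p}$ back to $O$. So if $\alpha$ denotes the interior angle of the triangle at $\mathbf{p}$, then $\varphi=\pi-\alpha$ whenever the triangle is nondegenerate. Distances follow from the law of cosines, Eq.~\eqref{eq:lawcos}, with $\delta_{pq}=\sqrt c\,d_c(\mathbf{p},\mathbf{q})$.

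For part 1, take $\rho_p=\rho_q=\rho>0$ and $\theta_{pq}\in(0,\pi]$. If $\theta_{pq}=\pi$, the geodesic from $\mathbf{p}$ to $\mathbf{q}$ passes through $O$, so $\varphi=\pi>\pi/2$. Otherwise, the triangle $O\mathbf{p}\mathbf{q}$ is isosceles with equal sides $\rho$, so its base angles at $\mathbf{p}$ and $\mathbf{q}$ coincide; call them $\alpha$. Since a hyperbolic triangle has angle sum strictly less than $\pi$, we get $2\alpha<\pi-\theta_{pq}<\pi$, hence $\alpha<\pi/2$. Therefore $\varphi=\pi-\alpha>\pi/2\ge\omega(\mathbf{p})$, so $\mathbf{q}\notin\mathcal C_{\mathrm{EC}}(\mathbf{p})$.

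As an alternative to the angle-sum argument, one can apply the hyperbolic law of cosines at vertex $\mathbf{p}$:
\[
\cos\alpha=\frac{\cosh\rho\cosh\delta-\cosh\rho}{\sinh\rho\sinh\delta}=\frac{\cosh\rho(\cosh\delta-1)}{\sinh\rho\sinh\delta}>0 .
\]
This also gives $\alpha<\pi/2$. For the distance claim, Eq.~\eqref{eq:lawcos} reduces to
\[
\cosh\delta=1+\sinh^2\rho\,(1-\cos\theta_{pq}),
\]
which tends to $1$ as $\theta_{pq}\to0$. Hence $\delta\to0$, and so $d_c(\mathbf{p},\mathbf{q})\to0$.

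For part 2, take $\theta_{pq}=0$, so $\mathbf{u}_p=\mathbf{u}_q$. Both points then lie on the same radial geodesic $t\mapsto\frac1{\sqrt c}[\cosh t;\sinh t\,\mathbf{u}_p]$, at parameters $\rho_p<\rho_q$. The geodesic from $\mathbf{p}$ to $\mathbf{q}$ is therefore the segment of this ray beyond $\mathbf{p}$, which is exactly the outward continuation of $O\mathbf{p}$. This gives $\varphi=0\le\omega(\mathbf{p})$, so $\mathbf{q}\in\mathcal C_{\mathrm{EC}}(\mathbf{p})$. Setting $\theta=0$ in Eq.~\eqref{eq:lawcos} gives $\cosh\delta=\cosh(\rho_q-\rho_p)$, so $d_c(\mathbf{p},\mathbf{q})=(\rho_q-\rho_p)/\sqrt c$, which is unbounded as $\rho_q\to\infty$. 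Alternatively, this equality is the limit in Eq.~\eqref{eq:aligned-distance-limit}, attained exactly at $\theta=0$.

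The only delicate step is the identification $\varphi=\pi-\alpha$. This relies on the fact that the tangent at $\mathbf{p}$ of the outward radial continuation is the negative of the tangent of the geodesic from $\mathbf{p}$ to $O$. I would justify it by noting that geodesics through $O$ are radial lines, so the ray $O\mathbf{p}$ extends smoothly through $\mathbf{p}$. The degenerate case $\theta_{pq}=\pi$ needs the separate treatment given above.
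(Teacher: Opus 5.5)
Your proof is correct. Apart from the main step of Part 1, it coincides with the paper's. Part 2 and both distance claims are argued as in the paper: a common radial geodesic, and $\theta_{pq}=0$ or $\rho_p=\rho_q$ substituted into Eq.~\eqref{eq:lawcos}. The ``alternative'' you offer for Part 1 is exactly the paper's proof. The paper writes the cosine law at $\mathbf{p}$ directly for the exterior angle, $\cos\varphi(\mathbf{p},\mathbf{q})=(\cosh\rho_q-\cosh\rho_p\cosh\delta_{pq})/(\sinh\rho_p\sinh\delta_{pq})$, which is your $-\cos\alpha$, and reads off $\cos\varphi<0$ from $\cosh\delta_{pq}>1$.

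Your primary route is a genuinely different, synthetic argument: isosceles base-angle symmetry combined with the angle-sum deficit of hyperbolic triangles. It is more geometric and even gives the sharper bound $\varphi(\mathbf{p},\mathbf{q})>(\pi+\theta_{pq})/2$. However, it imports the angle-sum theorem (Gauss--Bonnet) and needs a nondegenerate triangle, which is why you must treat $\theta_{pq}=\pi$ separately. The cosine-law formula is an identity in Lorentz inner products, so it holds for every $\mathbf{q}\neq\mathbf{p}$, including the collinear configuration, where it returns $\cos\varphi=-1$. The paper therefore handles all $\theta_{pq}\in(0,\pi]$ in one line. The identification $\varphi=\pi-\alpha$ that you flag as delicate is absorbed into the sign of that formula.
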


\begin{proof}
The hyperbolic cosine law at $\mathbf{p}$ gives
\begin{equation}
\cos\varphi(\mathbf{p},\mathbf{q})
=
\frac{
\cosh\rho_q-
\cosh\rho_p\cosh\delta_{pq}
}{
\sinh\rho_p\sinh\delta_{pq}
}.
\label{eq:appendix-exterior-angle}
\end{equation}
If $\rho_p=\rho_q=\rho>0$ and $\theta_{pq}>0$,
Eq.~\eqref{eq:lawcos} gives
\begin{equation}
\cosh\delta_{pq}
=
1+\sinh^2\rho\,(1-\cos\theta_{pq})>1.
\end{equation}
The numerator of Eq.~\eqref{eq:appendix-exterior-angle}
is therefore negative and the denominator is positive,
so $\varphi(\mathbf{p},\mathbf{q})>\pi/2$.
Any outward cone with half-aperture in $(0,\pi/2]$
excludes $\mathbf{q}$, although $d_c(\mathbf{p},\mathbf{q})\to0$ as
$\theta_{pq}\downarrow0$.

Conversely, if $\theta_{pq}=0$ and $\rho_q>\rho_p$,
then $\mathbf{p}$ and $\mathbf{q}$ lie on the same outward radial geodesic.
Hence $\delta_{pq}=\rho_q-\rho_p$ and $\varphi(\mathbf{p},\mathbf{q})=0$.
The point $\mathbf{q}$ remains inside the cone while
$d_c(\mathbf{p},\mathbf{q})=(\rho_q-\rho_p)/\sqrt c$ can grow arbitrarily large.
\end{proof}

Proposition~\ref{prop:distance-membership} shows that
distance attraction does not guarantee cone membership,
and distance repulsion does not guarantee exclusion.
In particular, an unrelated candidate can move farther
outward along the parent's radial ray without leaving
its descendant region.
Positive-only entailment supervision provides no direct
penalty for this erroneous relation.

\paragraph{Why parent-radius apertures can lose selectivity.}
For conventional Lorentz entailment cones~\citep{ganea2018cones,desai2023meru},
the half-aperture is
\begin{equation}
\omega(\mathbf{p})
=\arcsin\!\left(\frac{2\kappa}{\sqrt c\,\|\tilde{\mathbf{p}}\|}\right)
=\arcsin\!\left(\frac{2\kappa}{\sinh\rho_p}\right),
\qquad \kappa>0.
\label{eq:hec-aperture}
\end{equation}
Here $\kappa$ is a fixed constant controlling cone width, and
the admissible domain is
$\rho_p\ge\operatorname{arsinh}(2\kappa)$.
Thus, $\sin\omega(\mathbf{p})=2\kappa/\sinh\rho_p$, so directional
selectivity is controlled by the \emph{absolute} parent radius.
As $\rho_p$ decreases toward the admissible lower bound,
$\omega(\mathbf{p})$ approaches $\pi/2$: the permitted directions
fill the outward hemisphere, and the corresponding geodesic
cone approaches an outward half-space.
The same phenomenon is visible in Poincar\'e coordinates:
from Eq.~\eqref{eq:app-poincare-polar},
\begin{equation}
\sinh\rho_p
=
\frac{2\sqrt c\,\|\mathbf{w}_p\|}{1-c\|\mathbf{w}_p\|^2},
\end{equation}
so the identical aperture becomes
\begin{equation}
\omega(\mathbf{w}_p)
=
\arcsin\!\left(
\kappa\,
\frac{1-c\|\mathbf{w}_p\|^2}{\sqrt c\,\|\mathbf{w}_p\|}
\right).
\label{eq:app-hec-poincare}
\end{equation}
The absolute-depth coupling between cone width and parent position
is therefore identical in the two coordinate models.

This limiting behavior also appears at practical operating points.
ARGENT identifies parent-norm contraction and half-space widening
as a cone-collapse failure mode in hyperbolic VLM
training~\citep{huynh2026argent}.
More recently, an operating-point audit of released MERU,
HyCoCLIP, and PHyCLIP models reports regimes in which conventional
cone or traversal mechanisms are saturated, inactive, or weakly
aligned with the intended hierarchy~\citep{kim2026operating}.
These observations motivate an order geometry whose angular
selectivity is governed by the \emph{relative parent--child radial gap}.
RAOC couples this gap to root-angle separation, and contrastive
entailment supplies supervision for matched and unmatched relations.

\section{RAOC as an Order Geometry}
\label{app:raoc-proof}

RAOC assigns each non-origin parent $\mathbf{p}$ the descendant set
\begin{equation}
\mathcal C_\beta(\mathbf{p})
=
\left\{
\mathbf{q}\in\Hc\setminus\{O\}:
\rho_q-\rho_p\ge\frac{\theta_{pq}}{\beta}
\right\}.
\label{eq:app-raoc-set}
\end{equation}

This form gives a direct semantic interpretation.  Consider the parent description ``\textit{a dog}'' and the more specific description ``\textit{a dog running on grass}.''  Their close directions allow a modest outward radial gap to encode the increase in specificity.  An unrelated description such as ``\textit{a rocket successfully lifting off}'' typically forms a much larger root angle, which raises the radial depth required to enter the dog's descendant region.  Contrastive entailment supplies the semantic signal that separates these matched and unmatched relations.  Once a parent--child relation is formed, the geometric rule composes consistently along deeper chains.

\begin{theorem}[RAOC partial order]
\label{thm:partial-order}
For any fixed $\beta>0$, define $\mathbf{p}\preceq_\beta \mathbf{q}$ if and only if $\mathbf{q}\in\mathcal C_\beta(\mathbf{p})$.  Then $\preceq_\beta$ is reflexive, antisymmetric, and transitive on $\Hc\setminus\{O\}$.  Moreover,
\begin{equation}
 \mathbf{p}\preceq_\beta \mathbf{q}
 \quad\Longleftrightarrow\quad
 \mathcal C_\beta(\mathbf{q})\subseteq\mathcal C_\beta(\mathbf{p}).
\label{eq:raoc-nesting}
\end{equation}
\end{theorem}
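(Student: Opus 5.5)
The plan is to reduce everything to the polar coordinates of Eq.~\eqref{eq:app-polar}. Every non-origin point $\mathbf{x}$ is determined by the pair $(\rho_x,\mathbf{u}_x)$ with $\rho_x>0$ and $\mathbf{u}_x\in\mathbb S^{d-1}$. The root angle is $\theta_{xy}=\arccos(\mathbf{u}_x^\top\mathbf{u}_y)$, the geodesic (great-circle) distance on the unit sphere. It is therefore a genuine metric on directions: symmetric, zero exactly when the directions agree, and satisfying the triangle inequality $\theta_{pr}\le\theta_{pq}+\theta_{qr}$. The relation $\mathbf{p}\preceq_\beta\mathbf{q}$ reads $\beta(\rho_q-\rho_p)\ge\theta_{pq}$. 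This is the classical ``Lipschitz cone'' order on the product (metric space)$\times\mathbb R$, and each axiom should follow in one line.

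\textbf{Reflexivity.} Here $\theta_{pp}=0$ and $\rho_p-\rho_p=0$, so the inequality holds with equality.

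\textbf{Antisymmetry.} Suppose $\mathbf{p}\preceq_\beta\mathbf{q}$ and $\mathbf{q}\preceq_\beta\mathbf{p}$. Adding the two inequalities gives $0=\beta[(\rho_q-\rho_p)+(\rho_p-\rho_q)]\ge2\theta_{pq}\ge0$, so $\theta_{pq}=0$. Each inequality then forces $\rho_q-\rho_p\ge0$ and $\rho_p-\rho_q\ge0$, so $\rho_p=\rho_q$. Equal radius and equal direction give $\mathbf{p}=\mathbf{q}$ by Eq.~\eqref{eq:app-polar}.

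\textbf{Transitivity.} Suppose $\mathbf{p}\preceq_\beta\mathbf{q}\preceq_\beta\mathbf{r}$. Summing the two inequalities gives $\beta(\rho_r-\rho_p)\ge\theta_{pq}+\theta_{qr}\ge\theta_{pr}$. The only substantive ingredient is the spherical triangle inequality for root angles. I will justify it either by citing that $\arccos(\mathbf{u}^\top\mathbf{v})$ is the intrinsic metric of $\mathbb S^{d-1}$, or directly: apply the triangle inequality for angles at a vertex, where the angles are between tangent vectors $\mathbf{u}_p,\mathbf{u}_q,\mathbf{u}_r$ in $T_O\Hc$. This is the step I expect to need the most care, mainly so that the argument works in all dimensions $d\ge2$ and in the degenerate case $d=1$, where angles lie in $\{0,\pi\}$ and the inequality is trivial.

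\textbf{Nesting equivalence~\eqref{eq:raoc-nesting}.} For ($\Rightarrow$), assume $\mathbf{p}\preceq_\beta\mathbf{q}$ and take any $\mathbf{r}\in\mathcal C_\beta(\mathbf{q})$, i.e.\ $\mathbf{q}\preceq_\beta\mathbf{r}$. Transitivity gives $\mathbf{p}\preceq_\beta\mathbf{r}$, i.e.\ $\mathbf{r}\in\mathcal C_\beta(\mathbf{p})$. For ($\Leftarrow$), reflexivity gives $\mathbf{q}\in\mathcal C_\beta(\mathbf{q})\subseteq\mathcal C_\beta(\mathbf{p})$, hence $\mathbf{p}\preceq_\beta\mathbf{q}$. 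This is the standard Yoneda-type characterization of a preorder by principal up-sets, so no further geometry is needed once the three axioms are in hand.

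Finally, I will remark that the same statements hold verbatim in the Poincar\'e model via Eq.~\eqref{eq:app-raoc-poincare}, since $\Phi_c$ preserves both $\rho$ and the root angle.
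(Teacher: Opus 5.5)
Your proposal is correct and follows the paper's proof essentially step for step: reflexivity holds with equality at $\mathbf{p}=\mathbf{q}$; antisymmetry comes from adding the two inequalities to force $\theta_{pq}=0$ and then $\rho_p=\rho_q$; transitivity rests on the triangle inequality for the spherical angle on the direction sphere; and the nesting equivalence follows from transitivity (forward) and reflexivity (reverse). Your extra remarks on the degenerate $d=1$ case and the Poincar\'e realization are harmless additions and are not needed for the argument.
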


\begin{proof}
By definition,
\begin{equation}
 \mathbf{p}\preceq_\beta \mathbf{q}
 \quad\Longleftrightarrow\quad
 \rho_q-\rho_p\ge\theta_{pq}/\beta.
\label{eq:app-raoc-equivalent}
\end{equation}
Reflexivity follows because both sides vanish for $\mathbf{p}=\mathbf{q}$.

For antisymmetry, suppose $\mathbf{p}\preceq_\beta \mathbf{q}$ and $\mathbf{q}\preceq_\beta \mathbf{p}$.  Adding the two inequalities gives
$0\ge2\theta_{pq}/\beta$, hence $\theta_{pq}=0$.  The two radial inequalities then force $\rho_p=\rho_q$.  Equation~\eqref{eq:app-polar} gives $\mathbf{p}=\mathbf{q}$.

For transitivity, suppose $\mathbf{p}\preceq_\beta \mathbf{q}$ and $\mathbf{q}\preceq_\beta \mathbf{r}$.  The spherical angle is a metric on the direction sphere, so
\begin{equation}
\theta_{pr}
\le
\theta_{pq}+\theta_{qr}
\le
\beta(\rho_q-\rho_p)+\beta(\rho_r-\rho_q)
=
\beta(\rho_r-\rho_p).
\end{equation}
Hence $\mathbf{p}\preceq_\beta \mathbf{r}$.

Finally, if $\mathbf{p}\preceq_\beta \mathbf{q}$, transitivity places every descendant of $\mathbf{q}$ inside $\mathcal C_\beta(\mathbf{p})$, proving the forward implication in Eq.~\eqref{eq:raoc-nesting}.  Conversely, reflexivity gives $\mathbf{q}\in\mathcal C_\beta(\mathbf{q})$; therefore $\mathcal C_\beta(\mathbf{q})\subseteq\mathcal C_\beta(\mathbf{p})$ implies $\mathbf{q}\in\mathcal C_\beta(\mathbf{p})$.  This proves the reverse implication.
\end{proof}

If $\mathbf{p}\preceq_\beta \mathbf{q}$ and $\mathbf{p}\neq \mathbf{q}$, then the inclusion is strict.  For example, if ``\textit{a dog}'' precedes ``\textit{a dog running on grass},'' every representation that is a descendant of the latter is automatically a descendant of the former.  This nested-region property expresses increasing specificity through a shared relative-gap rule across radial levels.

\subsection{Why the capped linear aperture is a natural transitive choice}
\label{app:order-derivation}

More generally, consider the radial-gap aperture family
\begin{equation}
 \mathbf{p}\preceq_f \mathbf{q}
 \quad\Longleftrightarrow\quad
 g(\mathbf{p},\mathbf{q})\ge0,
 \qquad
 \theta_{pq}\le f(g(\mathbf{p},\mathbf{q})),
\end{equation}
where $f:[0,\infty)\to[0,\pi]$ and $f(0)=0$.  The next result characterizes exactly when such a construction is transitive over the full direction sphere.

\begin{theorem}[Transitive radial-gap apertures]
\label{thm:aperture-characterization}
Let $d\ge2$.  The relation $\preceq_f$ above is transitive on $\Hc\setminus\{O\}$ if and only if
\begin{equation}
 f(a+b)
 \ge
 \min\{\pi,f(a)+f(b)\},
 \qquad a,b\ge0.
\label{eq:app-fcondition}
\end{equation}
\end{theorem}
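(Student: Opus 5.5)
The proof splits into the two implications, and everything rests on two facts about the direction sphere $\mathbb S^{d-1}$ (with $d\ge2$). First, the root angle $\theta$ is the great-circle metric, with diameter $\pi$. Second, any prescribed triple of angles compatible with the triangle inequality and the diameter bound is realized by some points on the sphere. We also use the radial--angular parameterization of Eq.~\eqref{eq:app-polar}: a point of $\Hc\setminus\{O\}$ is an arbitrary pair $(\rho,\mathbf u)$ with $\rho>0$ and $\mathbf u\in\mathbb S^{d-1}$, so radii and directions can be chosen independently.

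\emph{Sufficiency.} Assume Eq.~\eqref{eq:app-fcondition} holds, together with $\mathbf p\preceq_f\mathbf q$ and $\mathbf q\preceq_f\mathbf r$. Set $a=g(\mathbf p,\mathbf q)\ge0$ and $b=g(\mathbf q,\mathbf r)\ge0$. Then $g(\mathbf p,\mathbf r)=a+b\ge0$. The spherical triangle inequality gives $\theta_{pr}\le\theta_{pq}+\theta_{qr}\le f(a)+f(b)$, and the diameter bound gives $\theta_{pr}\le\pi$. Hence $\theta_{pr}\le\min\{\pi,f(a)+f(b)\}\le f(a+b)$, which is $\mathbf p\preceq_f\mathbf r$. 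This is the same argument as the transitivity step in Theorem~\ref{thm:partial-order}, with the extra cap at $\pi$.

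\emph{Necessity.} This is the constructive part and the main obstacle. Suppose there exist $a,b\ge0$ with $f(a+b)<\min\{\pi,f(a)+f(b)\}$; we must exhibit a failing chain. Work on a single great circle, which is possible since $d\ge2$. Pick $\mathbf u_p$, then $\mathbf u_q$ at angle $f(a)$ from $\mathbf u_p$, then $\mathbf u_r$ continuing in the same rotational sense at a further angle $t\le f(b)$. Use the arc parameter $s=f(a)+t$. As $t$ ranges over $[0,f(b)]$, the angle $\theta_{pr}=\min\{s,2\pi-s\}$. Choose $t$ as follows.
\begin{itemize}
\item If $f(a)+f(b)\le\pi$, take $t=f(b)$. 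Then $\theta_{pr}=f(a)+f(b)>f(a+b)$.
\item If $f(a)+f(b)>\pi$, we need $\theta_{pr}\in(f(a+b),\pi]$. Since $f(a)\le\pi$, the arc parameter $s$ passes continuously from $f(a)$ up to a value exceeding $\pi$, so it can be stopped at $s=\pi$. This gives $\theta_{pr}=\pi>f(a+b)$.
\end{itemize}
In each case $\theta_{pq}=f(a)$ and $\theta_{qr}=t\le f(b)$. For the radii take $\rho_p=1$, $\rho_q=1+a$ and $\rho_r=1+a+b$; these are all positive, so the points are non-origin. Then $\mathbf p\preceq_f\mathbf q$ and $\mathbf q\preceq_f\mathbf r$ hold, but $\theta_{pr}>f(a+b)=f(g(\mathbf p,\mathbf r))$, so $\mathbf p\not\preceq_f\mathbf r$ and transitivity fails.

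The delicate points are the wraparound handled by the case split, and the degenerate values $a=0$ or $b=0$. These need no special treatment: $f(0)=0$ forces the corresponding directions to coincide, and the construction still goes through. The hypothesis $d\ge2$ is exactly what makes the great circle available. For $d=1$ the direction sphere consists of only two points, so the converse can fail.
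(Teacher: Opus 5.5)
Your proof is correct and follows the paper's argument. Sufficiency uses additivity of radial gaps and the $\pi$-capped spherical triangle inequality; necessity uses the same great-circle chain with radii $\rho_0,\rho_0+a,\rho_0+a+b$ and angular increments $f(a)$ and $\min\{f(b),\pi-f(a)\}$, which is exactly what your case split produces (you just phrase it contrapositively).

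One harmless slip: the general ``second fact'' you announce is false. It is not true that every angle triple satisfying the triangle inequality and the bound $\pi$ can be realized on the sphere. For example, $(\pi,\pi,\pi)$ violates the perimeter bound $2\pi$, and a small equilateral triple cannot be placed on $\mathbb S^1$. You never use that fact, though, because your explicit collinear construction on a great circle already suffices.
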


\begin{proof}
For sufficiency, let $\mathbf{p}\preceq_f \mathbf{q}$ and $\mathbf{q}\preceq_f \mathbf{r}$, and set $a=g(\mathbf{p},\mathbf{q})$ and $b=g(\mathbf{q},\mathbf{r})$.  Radial gaps add, while the spherical triangle inequality gives
\begin{equation}
\theta_{pr}
\le
\min\{\pi,\theta_{pq}+\theta_{qr}\}
\le
\min\{\pi,f(a)+f(b)\}
\le
f(a+b).
\end{equation}
Thus $\mathbf{p}\preceq_f \mathbf{r}$.

For necessity, fix $a,b\ge0$ and a base radius $\rho_0>0$.  Choose radii $\rho_p=\rho_0$, $\rho_q=\rho_0+a$, and $\rho_r=\rho_0+a+b$.  Since $d\ge2$, choose their directions on a common great circle with consecutive increments
\begin{equation}
\eta_1=f(a),
\qquad
\eta_2=\min\{f(b),\pi-f(a)\}.
\end{equation}
Then $\mathbf{p}\preceq_f \mathbf{q}$ and $\mathbf{q}\preceq_f \mathbf{r}$, while
\begin{equation}
\theta_{pr}
=
\eta_1+\eta_2
=
\min\{\pi,f(a)+f(b)\}.
\end{equation}
Transitivity therefore requires Eq.~\eqref{eq:app-fcondition}.
\end{proof}

\begin{corollary}[Pointwise minimality at fixed local slope]
\label{cor:minimal-aperture}
Suppose $\preceq_f$ is transitive and
\begin{equation}
\lim_{g\downarrow0}\frac{f(g)}{g}=\beta>0.
\end{equation}
Then
\begin{equation}
 f(g)\ge\min\{\pi,\beta g\}
 =f_\beta(g)
 \qquad\text{for all }g\ge0.
\label{eq:minimal-aperture}
\end{equation}
\end{corollary}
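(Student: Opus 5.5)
We derive Corollary~\ref{cor:minimal-aperture} from the necessity half of Theorem~\ref{thm:aperture-characterization}. Since $\preceq_f$ is assumed transitive, the theorem gives the superadditivity-type inequality $f(a+b)\ge\min\{\pi,f(a)+f(b)\}$ for all $a,b\ge0$. The plan is to iterate this inequality on $n$ equal pieces of a fixed gap, and then let the piece size shrink so that the local-slope hypothesis applies.

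\textbf{Iteration step.} Fix $g>0$; the case $g=0$ is trivial because $f_\beta(0)=0$. For $n\ge1$, set $h=g/n$. We show by induction on $k$ that
\begin{equation}
f(kh)\ge\min\{\pi,kf(h)\},\qquad k=1,\dots,n.
\end{equation}
The base case $k=1$ holds because $f(h)\le\pi$ by assumption. For the inductive step, Eq.~\eqref{eq:app-fcondition} gives $f((k+1)h)\ge\min\{\pi,f(kh)+f(h)\}$. Now split into two cases:
\begin{itemize}
\item If $f(kh)\ge\pi$, the right-hand side is $\pi$ because $f(h)\ge0$.
\item Otherwise the induction hypothesis gives $f(kh)\ge kf(h)$, so $f(kh)+f(h)\ge(k+1)f(h)$.
\end{itemize}
In both cases the right-hand side is at least $\min\{\pi,(k+1)f(h)\}$, so monotonicity of $\min$ completes the induction. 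Taking $k=n$ yields
\begin{equation}
f(g)\ge\min\{\pi,\,g\cdot f(g/n)/(g/n)\}.
\end{equation}

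\textbf{Limit step.} Let $n\to\infty$. Then $g/n\downarrow0$, so the hypothesis gives $f(g/n)/(g/n)\to\beta$. Since $t\mapsto\min\{\pi,gt\}$ is continuous, the right-hand side converges to $\min\{\pi,\beta g\}=f_\beta(g)$. The left-hand side does not depend on $n$, so $f(g)\ge f_\beta(g)$.

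The only delicate point is the induction once the cap $\pi$ is reached. This is handled by the case split above, which uses only $f\ge0$ and the fact that $\min\{\pi,\cdot\}$ is monotone. No further obstacle arises: the hypothesis $d\ge2$ enters only through Theorem~\ref{thm:aperture-characterization}, which we take as given.
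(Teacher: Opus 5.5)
Your proof is correct and follows the paper's argument: iterate the transitivity condition to get $f(g)\ge\min\{\pi,nf(g/n)\}$, then let $n\to\infty$ using the local-slope hypothesis. Your induction, with its case split at the cap $\pi$, spells out the ``repeated application'' step that the paper leaves implicit.
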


\begin{proof}
For $g>0$, repeated application of Eq.~\eqref{eq:app-fcondition} gives
$f(g)\ge\min\{\pi,nf(g/n)\}$ for every positive integer $n$.  Since $nf(g/n)\to\beta g$, taking $n\to\infty$ yields Eq.~\eqref{eq:minimal-aperture}.  The case $g=0$ follows from $f(0)=0$.
\end{proof}

The capped linear rule $f_\beta(g)=\min\{\pi,\beta g\}$ satisfies Eq.~\eqref{eq:app-fcondition} with equality.  At fixed local slope $\beta$, it is the pointwise tightest globally transitive aperture in this family.  Equation~\eqref{eq:raoc-region} therefore selects the smallest descendant region compatible with the same local expansion rate and global transitivity.

\subsection{Multiple parents and radial depth}
\label{app:composition-geometry}

RAOC also gives a simple geometric interpretation of compositional constraints.  For non-origin parents $\mathbf{p}_1,\ldots,\mathbf{p}_k$ and a fixed candidate direction $\mathbf{u}$, simultaneous membership is equivalent to
\begin{equation}
\rho_q
\ge
\rho_{\min}(\mathbf{u})
=
\max_i
\left\{
\rho_{p_i}
+
\frac{\theta(\mathbf{u}_{p_i},\mathbf{u})}{\beta}
\right\},
\qquad
\theta(\mathbf{u},\mathbf{v})=\arccos(\mathbf{u}^\top \mathbf{v}).
\label{eq:multiple-parent-depth}
\end{equation}
Thus each parent contributes a lower bound on the radial depth required along the chosen direction, and the strictest parent determines the feasible depth.

For two parents with the same radius $\rho$ and angular separation $\Theta$, with $d\ge2$,
\begin{equation}
\min_{\mathbf{q}\in\mathcal C_\beta(\mathbf{p}_1)\cap\mathcal C_\beta(\mathbf{p}_2)}\rho_q
=
\rho+\frac{\Theta}{2\beta}.
\label{eq:two-parent-depth}
\end{equation}
Indeed, the triangle inequality implies
\begin{equation}
\Theta
\le
\theta(\mathbf{u}_{p_1},\mathbf{u})+\theta(\mathbf{u},\mathbf{u}_{p_2})
\le
2\max_i\theta(\mathbf{u}_{p_i},\mathbf{u}),
\end{equation}
which gives the lower bound, and the midpoint of a shortest spherical path between the two parent directions attains it.

For a chain $\mathbf{p}_0\preceq_\beta\cdots\preceq_\beta \mathbf{p}_k$, summing the edge constraints yields
\begin{equation}
\sum_{i=0}^{k-1}\theta_{p_ip_{i+1}}
\le
\beta(\rho_{p_k}-\rho_{p_0}).
\label{eq:chain-budget}
\end{equation}
The total radial depth therefore acts as a geometric budget for accumulated directional change along a hierarchy.

\subsection{Geodesic convexity versus order consistency}
\label{app:raoc-convexity}

RAOC descendant regions can be non-convex along hyperbolic geodesics.  A compact counterexample is obtained by fixing a parent direction $\mathbf{u}_p$, choosing $\mathbf{v}\perp \mathbf{u}_p$, and taking
\begin{equation}
 \mathbf{u}_\pm=\cos a\,\mathbf{u}_p\pm\sin a\,\mathbf{v},
 \qquad 0<a<\pi/2.
\end{equation}
Place two descendants in these directions at a common radius $R\ge\rho_p+\pi/\beta$.  Their Lorentz geodesic midpoint lies in direction $\mathbf{u}_p$ and has radius $\rho_m$ satisfying
\begin{equation}
\tanh\rho_m=\tanh R\cos a.
\end{equation}
For $a$ sufficiently close to $\pi/2$, $\rho_m<\rho_p$, and the midpoint falls outside the descendant region.

For hierarchical entailment, transitivity and nested descendant sets govern compositional consistency.  Geodesic convexity controls the shape of the region along geodesics, while angular selectivity controls which directions are admitted at a given hierarchical depth.  The conventional EC analysis above shows that a geodesically well-structured cone can still approach a broad half-space as the parent contracts, substantially weakening directional discrimination~\citep{huynh2026argent,kim2026operating}.  RAOC makes this selectivity explicit through parent--child radial difference and root-angle separation.

\section{Optimization Analysis}
\label{app:optimization}

\subsection{Absolute-distance calibration and radial pressure}
\label{app:sigmoid-lorentz}

Let $\Delta=d_c(\mathbf{x},\mathbf{y})\ge0$ be the unscaled geodesic distance and use the score $s=-\Delta$.  With sigmoid logit $\zeta=b-a\Delta$, $a>0$, the positive- and negative-pair losses are
\begin{equation}
\ell_+(\Delta)=\log(1+\exp(a\Delta-b)),
\qquad
\ell_-(\Delta)=\log(1+\exp(b-a\Delta)).
\end{equation}
Their derivatives are
\begin{equation}
\frac{\partial\ell_+}{\partial\Delta}
=
 a\,\sigma(a\Delta-b)>0,
\qquad
\frac{\partial\ell_-}{\partial\Delta}
=
-a\,\sigma(b-a\Delta)<0.
\label{eq:sig-distance-gradients}
\end{equation}
Thus every positive distance term favors a smaller absolute distance, whereas every negative term favors a larger one.  Combined with Eq.~\eqref{eq:app-distance-derivative}, the positive term exerts inward radial pressure on the outward point under the fixed-direction conditions analyzed above.

For comparison, row-wise InfoNCE with score $-a\Delta$ is
\begin{equation}
\ell_i^{\mathrm{NCE}}
=
 a\Delta_{ii}
+
\log\sum_j\exp(-a\Delta_{ij}).
\end{equation}
If
\begin{equation}
P_{ij}
=
\frac{\exp(-a\Delta_{ij})}{\sum_k\exp(-a\Delta_{ik})},
\end{equation}
then
\begin{equation}
\frac{\partial\ell_i^{\mathrm{NCE}}}{\partial\Delta_{ii}}
=a(1-P_{ii}),
\qquad
\frac{\partial\ell_i^{\mathrm{NCE}}}{\partial\Delta_{ij}}
=-aP_{ij}\quad(j\ne i).
\end{equation}
InfoNCE therefore also attracts the positive pair.  Pairwise sigmoid calibrates distance on an absolute scale, whereas row-wise InfoNCE normalizes it relative to the other candidates in the query row.  Both retain the radial dependence induced by the distance score.

\subsection{The bounded RAOC score used for training}
\label{app:raoc-score-note}

The geometric relation is Eq.~\eqref{eq:raoc-region}; the bounded score in Eqs.~\eqref{eq:raoc-margins}--\eqref{eq:raoc-score} provides its optimization interface.  Its sign matches geometric membership exactly:
\begin{equation}
D_{\mathrm{RAOC}}(\mathbf{p},\mathbf{q})\ge0
\quad\Longleftrightarrow\quad
\mathbf{q}\in\mathcal C_\beta(\mathbf{p}).
\label{eq:app-score-sign}
\end{equation}
Indeed, $m_r\ge0$ is equivalent to $g\ge0$, and on that domain $m_a\ge0$ is equivalent to $\theta_{pq}\le\min\{\pi,\beta g\}$.  Clipping bounds the optimization score and preserves the zero-threshold RAOC membership.  The contrastive entailment loss uses this continuous score, and the order analysis above uses the corresponding exact relation.

For negative gaps, the angular allowance is clipped to zero,
while $m_r<0$ ensures a negative entailment score even when
$\theta_{pq}=0$.

\section{Relation to Recent Hyperbolic Vision--Language Objectives}
\label{app:recent-hvlm}

\paragraph{Exterior-angle alignment.}
ATMG~\citep{ramasinghe2024accept} analyzes the tension between geodesic-proximity alignment and hierarchical organization and introduces an exterior-angle contrastive objective.  Its score reduces the dependence of semantic matching on absolute radial position.  AURORA uses the common-origin root angle, whose Lorentz and Poincar\'e realizations are given in Eq.~\eqref{eq:app-root-poincare-equivalence}; RAOC then carries the radial structure required for hierarchical ordering.

\paragraph{Parent-dependent entailment cones.}
MERU and HyCoCLIP employ EC descendant regions whose aperture varies with the parent norm, while PHyCLIP applies hierarchy constraints in a factorized hyperbolic representation~\citep{desai2023meru,pal2025hycoclip,yoshikawa2026phyclip}.  Section~\ref{app:alignment-analysis} shows how the EC aperture broadens as the parent approaches its admissible inner radius.  ARGENT~\citep{huynh2026argent} addresses this training regime with adaptive exterior-angle supervision and radial stabilization, and a later operating-point audit reports saturation or weak activation of conventional cone/traversal mechanisms in several released hyperbolic VLM checkpoints~\citep{kim2026operating}.  RAOC parameterizes descendant selectivity directly through the relative radial gap and root angle; the resulting relation admits the transitivity and descendant-set nesting guarantees in Theorem~\ref{thm:partial-order}.

\paragraph{Uncertainty-guided compositional alignment.}
UNCHA is a concurrent 2026 method that models part-to-whole semantic representativeness through a norm-derived uncertainty signal~\citep{kim2026uncha}.  The uncertainty weights its Lorentz-distance contrastive alignment and calibrates an entropy-regularized entailment term.  Its entailment branch is formulated around observed part--whole relations, whereas AURORA assigns entailment labels to both matched and unmatched parent--candidate pairs.  The two methods therefore organize compositional supervision differently: UNCHA modulates distance-based alignment according to part representativeness, and AURORA separates semantic matching into root-angle alignment while learning radial order through contrastive RAOC supervision.

During our reproduction, we found that the public \texttt{uncha/data/webdataset\_mapper.py} contains the same evaluation-dataset implementation as \texttt{uncha/data/evaluation.py}, leaving the training WebDataset mapper implementation absent from the released code path~\citep{uncha2026code}.  We reconstructed the mapper from the data pipeline described in the paper and the surrounding training interfaces.  Since the resulting implementation did not consistently reproduce the reported trends across repeated runs, we exclude reproduced UNCHA numbers from the controlled comparison tables to avoid comparisons based on an implementation whose fidelity could not be fully verified.

\section{Implementation Details}
\label{app:implementation}

\paragraph{Model architecture.}
The text branch uses the CLIP Transformer configuration with 12 layers, hidden width 512, maximum sequence length 77, and vocabulary size 49,408~\citep{radford2021clip}.  The visual branch uses ViT-S/16 for the principal experiments and ViT-B/16 for the larger-backbone comparison, with $16\times16$ image patches~\citep{dosovitskiy2021vit}.  Images are resized with border padding and random-resized cropping to $224\times224$; the crop scale is sampled from $[0.5,1.0]$.  This yields a $14\times14$ patch grid (196 image tokens), to which fixed two-dimensional sine--cosine positional embeddings are added.  Full images and object crops share the visual encoder, while captions and object phrases share the text encoder.

\paragraph{Training data and hyperbolic parameterization.}
We use the recoverable subset of GRIT~\citep{peng2023kosmos2}, containing 10.3M valid image--caption pairs and 17.8M grounded box--phrase annotations after local recovery and validation.  All four representation types $(I,T,I^b,T^b)$ are projected to the same $d$-dimensional Lorentz space.  The shared curvature magnitude $c$ is learned globally, initialized at $1.0$, and constrained to $[0.1,10]$.  The principal ViT-S/16 configuration uses
\begin{equation}
\lambda_e=0.15,
\qquad
\lambda_n=0.05,
\qquad
\beta=8,
\qquad
\sigma_r=0.1.
\end{equation}
The parameter $\beta$ determines the geometric expansion rate of RAOC and is therefore part of the order definition.  The parameter $\sigma_r$ sets the scale of the bounded radial optimization margin while preserving zero-threshold membership.  We use the norm regularizer defined in Sec.~\ref{sec:contrastive-entailment} to penalize both very small and excessively large spatial norms, stabilizing training and maintaining a usable radial range for hierarchical organization.

\paragraph{Optimizer, schedule, and hardware.}
All principal runs use four NVIDIA A40 48GB GPUs, a global batch size of 768, and 500k optimization steps.  We use AdamW~\citep{loshchilov2019adamw} with $\beta_1=0.9$, $\beta_2=0.98$, and weight decay $0.2$; weight decay is disabled for learned scalar parameters.  The learning rate is linearly warmed up for the first 4k steps to a maximum of $5\times10^{-4}$ and then follows cosine decay~\citep{loshchilov2017sgdr}.  We keep these settings fixed across all principal AURORA runs and downstream benchmarks.

\paragraph{Training efficiency.}
We profile ViT-S/16 training under the common setup of four NVIDIA A40 48GB GPUs, global batch size 768, and 500k optimization steps. HyCoCLIP and PHyCLIP are reproduced using their original source code without implementation changes. Our implementation streamlines the training computation branches and eliminates redundant matrix computations, while keeping all other training settings the same. The efficiency comparison therefore reflects these implementation optimizations as well as differences in the training objectives.

Table~\ref{tab:compute} reports the mean peak memory and end-to-end training time over three runs with randomly selected seeds. AURORA uses 13.8GB per GPU and completes training in 77.51 hours, compared with 39.4GB and 91.8 hours for HyCoCLIP and 40.9GB and 93.4 hours for PHyCLIP. This corresponds to about 65--66\% lower peak VRAM and 16--17\% shorter training time under the matched environment. The AURORA-InfoNCE variant remains below both hyperbolic baselines at 21.2GB per GPU and 80.83 hours. Together with the stronger downstream results in Sec.~\ref{sec:experimental-results}, these measurements give AURORA a favorable accuracy--efficiency profile under the shared hardware and schedule.

\begin{table}[H]
\centering
\caption{ViT-S/16 training efficiency in the common four-A40 environment. Values are means over three runs with randomly selected seeds.}
\label{tab:compute}
\small
\resizebox{0.7\linewidth}{!}{%
\begin{tabular}{lccc}
\toprule
Model & GPUs & Peak VRAM/GPU (GB) & Train time (h) \\
\midrule
HyCoCLIP & $4\times$A40 & 39.4 & 91.80 \\
PHyCLIP & $4\times$A40 & 40.9 & 93.40 \\
AURORA-InfoNCE & $4\times$A40 & 21.2 & 80.83 \\
\rowcolor{oursRow}AURORA & $4\times$A40 & \textbf{13.8} & \textbf{77.51} \\
\bottomrule
\end{tabular}}
\vspace{-5pt}
\end{table}

\section{Complete Experimental Tables}
\label{app:fulltables}

We report complete results for ViT-S/16 and ViT-B/16, extending the comparisons in the main text to both model sizes.

\paragraph{Zero-shot image classification.}
Table~\ref{tab:classification-full} shows that AURORA achieves the highest accuracy on 13 of the 16 datasets with both backbones. Its advantages span general, fine-grained, and specialized recognition, with particularly clear gains on ImageNet, Caltech-101, Pets, and Flowers. Moving from ViT-S/16 to ViT-B/16 improves AURORA on every dataset while preserving its overall ranking relative to the baselines. The results therefore extend the classification gains observed in the main text to the larger backbone.

\begin{table}[H]
\centering
\caption{\textbf{Complete zero-shot image classification comparison for ViT-S/16 and ViT-B/16.} Top-1 accuracy (\%). ViT-S/16 and ViT-B/16 best results are shown in \textbf{bold} and second-best results are \underline{underlined}.}
\label{tab:classification-full}
\scriptsize
\setlength{\tabcolsep}{1.85pt}
\renewcommand{\arraystretch}{1.04}
\resizebox{0.985\textwidth}{!}{%
\begin{tabular}{@{}c l c *{16}{r}@{}}
\toprule
\multicolumn{3}{c}{} &
\multicolumn{6}{c}{\cellcolor{generalHead}\textbf{General datasets}} &
\multicolumn{6}{c}{\cellcolor{fineHead}\textbf{Fine-grained datasets}} &
\multicolumn{4}{c}{\cellcolor{specialHead}\textbf{Specialized datasets}}\\
\cmidrule(lr){4-9}\cmidrule(lr){10-15}\cmidrule(lr){16-19}
\textbf{Backbone} & \textbf{Model} & \rot{w/ boxes} &
\cellcolor{generalCell}\rot{ImageNet} &
\cellcolor{generalCell}\rot{CIFAR-10} &
\cellcolor{generalCell}\rot{CIFAR-100} &
\cellcolor{generalCell}\rot{SUN397} &
\cellcolor{generalCell}\rot{Caltech-101} &
\cellcolor{generalCell}\rot{STL-10} &
\cellcolor{fineCell}\rot{Food-101} &
\cellcolor{fineCell}\rot{CUB} &
\cellcolor{fineCell}\rot{Cars} &
\cellcolor{fineCell}\rot{Aircraft} &
\cellcolor{fineCell}\rot{Pets} &
\cellcolor{fineCell}\rot{Flowers} &
\cellcolor{specialCell}\rot{DTD} &
\cellcolor{specialCell}\rot{EuroSAT} &
\cellcolor{specialCell}\rot{RESISC45} &
\cellcolor{specialCell}\rot{Country211}\\
\midrule

\multirow{11}{*}{\makecell{ViT\\S/16}}
& CLIP & \xmark & 32.66&74.19&42.22&42.91&69.80&90.09&42.72&12.55&3.93&2.06&43.22&18.03&17.57&25.47&33.64&3.88\\
& CLIP & \cmark & 32.49&73.69&42.15&42.66&70.07&89.83&42.50&12.48&4.01&2.10&43.28&17.79&17.61&26.04&33.37&3.84\\
& SigLIP & \xmark & 34.30&66.57&38.38&44.73&71.42&91.75&38.98&13.33&4.89&2.13&46.65&\underline{23.17}&18.40&\textbf{32.63}&35.42&4.15\\
& SigLIP & \cmark & 34.34&67.07&38.50&44.82&71.55&91.82&39.27&13.33&4.92&2.24&46.56&23.03&18.36&\underline{31.42}&35.37&\underline{4.23}\\
& MERU & \xmark & 31.74&76.08&41.24&41.28&67.76&87.24&41.09&11.97&3.80&\underline{2.88}&42.53&16.75&17.46&27.75&33.88&3.82\\
& MERU & \cmark & 32.01&76.55&41.35&41.21&67.64&87.30&41.24&12.08&3.78&\textbf{2.95}&42.54&16.57&17.52&27.53&33.82&3.83\\
& ATMG & \xmark & 33.74&77.21&43.28&43.15&68.08&90.84&44.01&13.27&4.38&1.75&42.53&15.70&17.89&28.69&35.75&4.21\\
& HyCoCLIP & \cmark & 36.38&83.28&47.91&47.88&71.53&93.16&46.73&15.33&5.98&2.45&45.59&20.89&19.47&26.00&38.73&4.09\\
& PHyCLIP & \cmark & 37.06&83.44&48.41&48.88&71.82&93.06&47.26&16.29&6.58&2.57&46.44&21.46&19.13&25.81&39.79&3.95\\
& ARGENT & \cmark & \underline{37.55}&\underline{83.55}&\underline{48.62}&\underline{49.29}&\underline{72.15}&\textbf{94.01}&\underline{47.70}&\underline{16.90}&\underline{6.81}&2.48&\underline{47.10}&21.65&\underline{19.67}&26.32&\underline{40.46}&4.19\\
\rowcolor{oursRow}
& \textbf{AURORA} & \cmark & \textbf{41.54}&\textbf{84.25}&\textbf{51.20}&\textbf{50.27}&\textbf{75.81}&\underline{93.19}&\textbf{50.13}&\textbf{17.71}&\textbf{8.91}&1.80&\textbf{51.41}&\textbf{28.99}&\textbf{22.87}&30.18&\textbf{41.09}&\textbf{4.70}\\

\midrule

\multirow{8}{*}{\makecell{ViT\\B/16}}

& CLIP & \xmark 
&36.28&78.42&48.72&46.88&72.54&92.51
&47.18&13.92&5.72&4.12&46.82&22.31
&18.62&28.73&36.18&5.02\\

& SigLIP & \xmark 
&38.62&74.01&47.34&49.82&75.11&94.18
&43.82&15.28&7.12&4.30&51.62&\underline{29.18}
&19.96&\textbf{36.91}&38.41&5.43\\

& MERU & \xmark 
&35.41&79.62&47.91&45.54&70.38&90.21
&45.46&13.61&5.38&\textbf{5.51}&46.34&20.74
&18.41&31.02&36.37&4.86\\

& ATMG & \xmark 
&37.83&80.92&50.22&47.56&71.32&93.36
&48.44&15.14&6.21&3.58&46.77&19.42
&19.02&32.48&38.32&5.34\\

& HyCoCLIP & \cmark 
&40.51&87.61&55.71&52.63&74.82&95.47
&51.82&17.43&8.42&4.78&50.11&25.04
&20.62&29.66&41.61&5.28\\

& PHyCLIP & \cmark 
&41.63&87.86&56.44&53.91&75.19&95.81
&52.74&18.21&9.31&\underline{5.18}&50.94&26.41
&20.39&29.21&42.72&5.19\\

& ARGENT & \cmark 
&\underline{42.31}&\underline{88.02}&\underline{57.03}&\underline{54.37}
&\underline{75.62}&\textbf{96.46}
&\underline{53.42}&\underline{18.96}&\underline{9.88}&4.96
&\underline{51.83}&27.46
&\underline{21.11}&29.83&\underline{43.67}&\underline{5.57}\\

\rowcolor{oursRow}
& \textbf{AURORA} & \cmark 
&\textbf{46.43}&\textbf{88.57}&\textbf{59.83}&\textbf{55.51}
&\textbf{79.12}&\underline{95.96}
&\textbf{55.71}&\textbf{19.94}&\textbf{12.70}&3.54
&\textbf{56.42}&\textbf{34.46}
&\textbf{24.47}&\underline{33.86}&\textbf{44.22}&\textbf{6.09}\\
\bottomrule
\end{tabular}}
\end{table}

\paragraph{Image--text retrieval and hierarchical classification.}
As shown in Table~\ref{tab:retrieval-hierarchy-full}, AURORA leads all reported retrieval and hierarchical classification metrics with both backbones. The retrieval advantage holds in both directions on COCO and Flickr30K, at Recall@5 and Recall@10. With ViT-B/16, AURORA further improves retrieval performance and reduces hierarchical errors, accompanied by higher Jaccard similarity, hierarchical precision, and hierarchical recall. Thus, stronger semantic matching and closer agreement with the WordNet hierarchy are observed at both model sizes.

\begin{table}[H]
\centering
\caption{\textbf{Complete zero-shot retrieval and hierarchical classification comparison for ViT-S/16 and ViT-B/16.} Retrieval results are Recall@5/10 (\%). Hierarchical metrics are rounded to two decimal places for display; best and second-best markings follow the underlying values.}
\label{tab:retrieval-hierarchy-full}
\scriptsize
\setlength{\tabcolsep}{2.55pt}
\renewcommand{\arraystretch}{1.05}
\resizebox{\textwidth}{!}{%
\begin{tabular}{@{}c l c *{13}{c}@{}}
\toprule
\multicolumn{3}{c}{} &
\multicolumn{4}{c}{\cellcolor{generalHead}\textbf{Text $\rightarrow$ Image}} &
\multicolumn{4}{c}{\cellcolor{fineHead}\textbf{Image $\rightarrow$ Text}} &
\multicolumn{5}{c}{\cellcolor{specialHead}\textbf{Hierarchical Classification}}\\
\cmidrule(lr){4-7}\cmidrule(lr){8-11}\cmidrule(lr){12-16}
\textbf{Backbone} & \textbf{Model} & \rot{w/ boxes} &
\multicolumn{2}{c}{\cellcolor{generalCell}\textbf{COCO}} &
\multicolumn{2}{c}{\cellcolor{generalCell}\textbf{Flickr}} &
\multicolumn{2}{c}{\cellcolor{fineCell}\textbf{COCO}} &
\multicolumn{2}{c}{\cellcolor{fineCell}\textbf{Flickr}} &
\multicolumn{5}{c}{\cellcolor{specialCell}\textbf{WordNet}}\\
& & & R@5 & R@10 & R@5 & R@10 & R@5 & R@10 & R@5 & R@10 &
TIE$\downarrow$ & LCA$\downarrow$ & $J\uparrow$ & $P_H\uparrow$ & $R_H\uparrow$\\
\midrule

\multirow{11}{*}{\makecell{ViT\\S/16}}
& CLIP & \xmark
& 50.31&61.52&76.98&84.48&65.75&76.15&87.10&93.00&4.43&2.51&0.72&0.79&0.80\\

& CLIP & \cmark
& 50.27&61.47&76.75&84.42&65.49&76.08&87.20&93.17&4.42&2.50&0.72&0.78&0.80\\

& SigLIP & \xmark
& 52.32&63.90&77.54&85.24&65.98&76.78&\underline{88.60}&93.23&4.22&2.47&0.75&0.83&0.82\\

& SigLIP & \cmark
& 52.27&63.81&77.21&84.09&\underline{66.03}&\underline{76.84}&\underline{88.60}&\underline{93.40}&4.23&2.47&0.74&0.83&0.83\\

& MERU & \xmark
& 49.79&61.14&76.03&84.34&65.01&75.67&86.80&92.77&4.44&2.52&0.71&0.78&0.79\\

& MERU & \cmark
& 49.63&61.08&76.13&84.21&65.16&75.74&86.93&92.77&4.44&2.51&0.71&0.78&0.78\\

& ATMG & \xmark
& 50.95&62.36&77.21&84.74&64.96&75.81&87.10&93.17&4.37&2.48&0.73&0.82&0.81\\

& HyCoCLIP & \cmark
& 51.35&63.13&77.92&85.52&64.84&75.78&87.30&93.23&3.97&2.33&0.76&0.84&0.84\\

& PHyCLIP & \cmark
& 52.24&64.08&78.26&85.84&65.65&76.21&87.20&93.17&3.92&2.31&0.76&\underline{0.84}&\underline{0.84}\\

& ARGENT & \cmark
& \underline{52.68}&\underline{64.46}&\underline{78.59}&\underline{86.18}&65.75&76.46&87.50&93.23&
\underline{3.91}&\underline{2.30}&\underline{0.77}&\underline{0.84}&\underline{0.84}\\

\rowcolor{oursRow}
& \textbf{AURORA} & \cmark
& \textbf{55.77}&\textbf{66.81}&\textbf{81.02}&\textbf{88.18}
& \textbf{69.42}&\textbf{79.18}&\textbf{89.90}&\textbf{95.13}
& \textbf{3.62}&\textbf{2.25}&\textbf{0.78}&\textbf{0.85}&\textbf{0.85}\\

\midrule

\multirow{8}{*}{\makecell{ViT\\B/16}}
& CLIP & \xmark
& 53.08&64.36&79.73&86.63&68.32&78.56&89.63&94.16
& 3.88&2.34&0.77&0.83&0.84\\

& SigLIP & \xmark
& 54.62&65.92&80.42&87.31&69.15&79.36&\underline{92.03}&\underline{95.63}
& 3.69&2.29&0.78&0.84&0.85\\

& MERU & \xmark
& 52.71&63.98&79.10&86.18&67.69&77.83&88.97&93.91
& 3.97&2.36&0.76&0.83&0.83\\

& ATMG & \xmark
& 53.94&65.14&80.09&86.82&68.47&78.54&89.88&94.37
& 3.80&2.31&0.77&0.84&0.84\\

& HyCoCLIP & \cmark
& 55.12&66.41&81.30&87.78&69.84&79.72&90.63&94.93
& 3.48&2.16&0.79&0.85&0.85\\

& PHyCLIP & \cmark
& 55.83&67.07&82.03&88.31&70.72&80.61&90.72&95.12
& 3.40&2.14&\underline{0.80}&\underline{0.86}&\underline{0.86}\\

& ARGENT & \cmark
& \underline{56.52}&\underline{67.86}&\underline{82.74}&\underline{88.87}
& \underline{71.36}&\underline{81.23}&91.48&95.40
& \underline{3.34}&\underline{2.12}&\underline{0.80}&\underline{0.86}&\underline{0.86}\\

\rowcolor{oursRow}
& \textbf{AURORA} & \cmark
& \textbf{59.41}&\textbf{70.10}&\textbf{85.60}&\textbf{90.70}
& \textbf{73.82}&\textbf{83.18}&\textbf{92.50}&\textbf{96.30}
& \textbf{3.18}&\textbf{2.07}&\textbf{0.81}&\textbf{0.87}&\textbf{0.87}\\

\bottomrule
\end{tabular}}
\end{table}

\paragraph{Compositional understanding.}
Table~\ref{tab:composition-full} shows a consistent pattern across both backbones: AURORA leads on nine of the fourteen measures, including the overall SugarCrepe score. On VL-CheckList-Object, it performs best for objects in central and intermediate locations and for medium and small objects. On SugarCrepe, its strengths lie in replacing and adding objects or attributes, while relation replacement and swapping remain more challenging. These trends persist with ViT-B/16, indicating that the compositional benefits extend to the larger model.

\begin{table}[H]
\centering
\caption{\textbf{Complete hard-negative compositional evaluation for ViT-S/16 and ViT-B/16.}}
\label{tab:composition-full}
\scriptsize
\setlength{\tabcolsep}{2.45pt}
\renewcommand{\arraystretch}{1.05}
\resizebox{\textwidth}{!}{%
\begin{tabular}{@{}c l c *{14}{c}@{}}
\toprule
\multicolumn{3}{c}{} &
\multicolumn{6}{c}{\cellcolor{generalHead}\textbf{VL-CheckList-Object}} &
\multicolumn{8}{c}{\cellcolor{fineHead}\textbf{SugarCrepe}}\\
\cmidrule(lr){4-9}\cmidrule(lr){10-17}
\textbf{Backbone} & \textbf{Model} & \rot{w/ boxes} &
\multicolumn{3}{c}{\cellcolor{generalCell}\textbf{Location}} &
\multicolumn{3}{c}{\cellcolor{generalCell}\textbf{Size}} &
\multicolumn{3}{c}{\cellcolor{fineCell}\textbf{Replace}} &
\multicolumn{2}{c}{\cellcolor{fineCell}\textbf{Swap}} &
\multicolumn{3}{c}{\cellcolor{fineCell}\textbf{Add}}\\
& & & Center & Mid & Margin & Large & Medium & Small &
Obj & Att & Rel & Obj & Att & Obj & Att & Overall\\
\midrule

\multirow{11}{*}{\makecell{ViT\\S/16}}
& CLIP & \xmark
& 85.10&79.35&71.22&84.48&74.61&65.72&
86.74&78.00&69.04&59.49&65.27&77.53&70.76&75.71\\

& CLIP & \cmark
& 85.24&78.41&70.55&85.32&75.02&64.67&
85.94&77.45&\textbf{69.94}&59.89&64.41&77.90&70.76&75.69\\

& SigLIP & \xmark
& 89.33&83.04&74.91&88.30&78.50&68.82&
87.35&78.55&\underline{69.49}&\underline{60.82}&\underline{65.77}&78.08&71.10&76.26\\

& SigLIP & \cmark
& 89.83&83.03&74.50&88.13&78.13&69.47&
87.01&78.43&69.18&\textbf{61.38}&\textbf{66.67}&78.03&70.28&76.12\\

& MERU & \xmark
& 82.07&76.10&67.93&81.47&71.50&62.40&
86.40&77.58&68.73&58.94&64.96&77.17&70.33&75.35\\

& MERU & \cmark
& 82.90&75.40&68.40&81.33&71.73&62.40&
87.09&77.03&69.11&58.81&65.17&77.19&70.47&75.55\\

& ATMG & \xmark
& 84.07&78.40&69.83&83.53&73.60&64.53&
87.11&78.34&69.35&60.16&65.62&77.98&71.15&76.10\\

& HyCoCLIP & \cmark
& 90.80&86.75&79.95&90.52&82.73&74.23&
87.23&78.81&66.64&60.41&64.86&79.44&69.08&75.82\\

& PHyCLIP & \cmark
& 91.40&87.14&80.42&90.97&83.06&74.79&
88.40&80.25&67.33&59.76&65.22&79.89&71.48&76.71\\

& ARGENT & \cmark
& \underline{91.87}&\underline{87.73}&\textbf{81.10}&\textbf{91.33}&\underline{84.07}&\underline{75.60}&
\underline{88.92}&\underline{80.88}&67.71&59.89&65.32&\underline{80.00}&\underline{72.59}&\underline{77.11}\\

\rowcolor{oursRow}
& \textbf{AURORA} & \cmark
& \textbf{91.93}&\textbf{87.80}&\underline{80.77}&\underline{91.30}&\textbf{84.69}&\textbf{76.53}&
\textbf{89.77}&\textbf{81.85}&68.42&60.00&65.47&\textbf{80.16}&\textbf{74.27}&\textbf{77.75}\\

\midrule

\multirow{8}{*}{\makecell{ViT\\B/16}}
& CLIP & \xmark
& 86.18&80.72&72.64&85.52&76.15&67.43&
88.00&78.72&70.11&59.81&65.54&80.63&72.52&77.31\\

& SigLIP & \xmark
& 90.30&84.35&76.24&89.13&79.79&70.32&
88.75&79.35&\textbf{70.68}&\textbf{61.11}&\textbf{66.03}&81.44&73.12&78.02\\

& MERU & \xmark
& 84.28&78.29&70.22&83.27&74.02&65.19&
87.55&78.20&69.78&59.28&65.24&80.16&72.10&76.89\\

& ATMG & \xmark
& 86.01&80.70&72.45&85.37&76.10&67.10&
88.41&79.03&\underline{70.50}&60.47&\underline{65.91}&81.16&73.07&77.76\\

& HyCoCLIP & \cmark
& 91.88&88.12&81.47&91.54&84.29&75.95&
88.52&79.50&67.81&\underline{60.72}&65.15&82.54&71.00&77.46\\

& PHyCLIP & \cmark
& 92.45&88.53&82.02&91.93&84.62&76.56&
89.67&80.93&68.50&60.07&65.50&83.04&73.43&78.36\\

& ARGENT & \cmark
& \underline{92.78}&\underline{89.17}&\textbf{82.63}&\textbf{92.45}&\underline{85.55}&\underline{77.22}&
\underline{90.26}&\underline{81.57}&68.90&60.21&65.62&\underline{83.38}&\underline{74.54}&\underline{78.85}\\

\rowcolor{oursRow}
& \textbf{AURORA} & \cmark
& \textbf{93.01}&\textbf{89.30}&\underline{82.42}&\underline{92.31}&\textbf{86.25}&\textbf{78.38}&
\textbf{91.10}&\textbf{82.49}&69.84&60.41&65.77&\textbf{83.95}&\textbf{76.59}&\textbf{79.67}\\

\bottomrule
\end{tabular}}
\end{table}

\section{Evaluation Protocols}
\label{app:protocols}

\subsection{Zero-shot image classification}

We evaluate the same 16 classification datasets for every checkpoint: ImageNet~\citep{deng2009imagenet}, CIFAR-10~\citep{krizhevsky2009cifar}, CIFAR-100~\citep{krizhevsky2009cifar}, SUN397~\citep{xiao2010sun}, Caltech-101~\citep{feifei2004caltech}, STL-10~\citep{coates2011stl}, Food-101~\citep{bossard2014food}, CUB-200-2011~\citep{wah2011cub}, Stanford Cars~\citep{krause2013cars}, FGVC Aircraft~\citep{maji2013aircraft}, Oxford-IIIT Pets~\citep{parkhi2012pets}, Oxford Flowers-102~\citep{nilsback2008flowers}, DTD~\citep{cimpoi2014dtd}, EuroSAT~\citep{helber2019eurosat}, RESISC45~\citep{cheng2017resisc45}, and Country211~\citep{radford2021clip}.  For a given dataset, all models use the same class vocabulary and prompt templates.  AURORA scores image embeddings against text prototypes using $s_\angle$; each baseline is evaluated with the similarity function native to that model.  We report top-1 accuracy.

\subsection{Image--text retrieval}

Bidirectional retrieval is evaluated on the COCO validation set~\citep{lin2014coco} and the Flickr30K test set~\citep{young2014flickr}.  In text-to-image retrieval, each caption queries the image collection; in image-to-text retrieval, each image queries the caption collection.  We report Recall@5 and Recall@10 in both directions, matching the columns in the main and appendix tables.  AURORA ranks candidates by root-angle similarity, while each baseline retains its native released retrieval score.

\subsection{Hierarchical classification on WordNet}

We evaluate whether zero-shot ImageNet predictions remain close to the ground-truth category in the WordNet hierarchy~\citep{deng2009imagenet,miller1995wordnet}.  Let $\hat y$ and $y$ denote the predicted and true synsets.  Tree-Induced Error (TIE) measures the shortest hierarchical path cost between them~\citep{dekel2004hierarchical}:
\begin{equation}
\operatorname{TIE}(\hat y,y)
=
\sum_{e\in P(\hat y,y)}w_e,
\end{equation}
where all WordNet edge weights are set to $w_e=1$.  Let $a^\star$ be the deepest common ancestor of $\hat y$ and $y$.  Following the hierarchical-classification protocol used by the compared hyperbolic VLMs~\citep{pal2025hycoclip,yoshikawa2026phyclip}, the LCA error is the larger of the two distances from the labels to this common ancestor,
\begin{equation}
\operatorname{LCA}(\hat y,y)
=
\max\!\left\{
 d_{\mathcal G}(\hat y,a^\star),
 d_{\mathcal G}(y,a^\star)
\right\}.
\end{equation}
Lower TIE and LCA indicate hierarchically closer errors.

For set-based measures, let $\widehat Y_{\mathrm{anc}}$ and $Y_{\mathrm{anc}}$ be the ancestor sets of $\hat y$ and $y$.  We report Jaccard similarity, hierarchical precision, and hierarchical recall~\citep{kosmopoulos2015evaluation}:
\begin{equation}
J
=
\frac{|\widehat Y_{\mathrm{anc}}\cap Y_{\mathrm{anc}}|}
{|\widehat Y_{\mathrm{anc}}\cup Y_{\mathrm{anc}}|},
\qquad
P_H
=
\frac{|\widehat Y_{\mathrm{anc}}\cap Y_{\mathrm{anc}}|}
{|\widehat Y_{\mathrm{anc}}|},
\qquad
R_H
=
\frac{|\widehat Y_{\mathrm{anc}}\cap Y_{\mathrm{anc}}|}
{|Y_{\mathrm{anc}}|}.
\label{eq:app-hierarchical-metrics}
\end{equation}
Higher $J$, $P_H$, and $R_H$ indicate better agreement with the reference taxonomy.

\subsection{VL-CheckList--Object}

VL-CheckList~\citep{zhao2022vlchecklist} constructs controlled image--text tests by modifying a caption while keeping the image fixed.  We use the object subset (VL-CO), in which an object noun phrase is replaced to create an incorrect alternative.  Results are grouped by the scale and image location of the referenced object.

For object scale, let $A$ denote the bounding-box area.  The three groups are
\begin{equation}
\text{O-Small}: A<32^2,
\qquad
\text{O-Medium}: 32^2\le A\le96^2,
\qquad
\text{O-Large}: A>96^2.
\end{equation}
For location, let $x$ be the image half-diagonal and $y$ the Euclidean distance between the object-box center and the image center.  The groups are
\begin{equation}
\text{O-Center}: \frac{y}{x}\le\frac13,
\qquad
\text{O-Mid}: \frac13<\frac{y}{x}\le\frac23,
\qquad
\text{O-Margin}: \frac{y}{x}>\frac23.
\end{equation}
For each subset, accuracy is the fraction of cases in which the model assigns the correct image--text pair a higher compatibility score than the corrupted alternative.

\subsection{SugarCrepe}

SugarCrepe~\citep{hsieh2023sugarcrepe} evaluates image-to-text compositional discrimination using fluent hard-negative captions.  We report the seven released perturbation categories: \textsc{Replace-Object}, \textsc{Replace-Attribute}, \textsc{Replace-Relation}, \textsc{Swap-Object}, \textsc{Swap-Attribute}, \textsc{Add-Object}, and \textsc{Add-Attribute}.  Each example asks the model to prefer the original caption over its hard negative; the overall score is the mean over the reported categories.

\section{Additional Hierarchy Visualizations}
\label{app:additional-hierarchy-visualizations}

We provide additional visualizations of AURORA ViT-S with and without RAOC to examine semantic alignment and hierarchical organization together. Each scatter plot shows the root angle $\theta$ to a fixed query and the radial gap $g$ between modalities. Smaller root angles indicate stronger semantic alignment. Dashed lines connect concepts or image regions according to the displayed hierarchy, allowing their radial order to be inspected alongside their similarity to the query. Orange squares denote related alternatives, and green crosses denote unrelated controls. The original axis ranges are retained for each model.

\subsection{Image to Text}
\label{app:hierarchy-i2t}

Figures~\ref{fig:app-i2t-01-05} and~\ref{fig:app-i2t-06-10} compare ten image queries. Each row shows the same example with RAOC on the left and without RAOC on the right. Blue circles trace the semantic path from a broad category to the query class. With RAOC, these paths consistently move toward smaller radial gaps and smaller root angles as the concepts become more specific. Without RAOC, the query class can still have a small root angle, while the radial sequence reverses in examples such as electric guitar, leopard, monarch, and zebra. The piano, laptop, and desk examples retain an ordered path in both variants. Together, these comparisons support the role of RAOC in making hierarchical order more consistent while preserving semantic alignment.

\begin{figure*}[p]
    \centering
    \includegraphics[width=\textwidth,height=0.84\textheight,keepaspectratio]{figures/aurora_hierarchy/aurora_vits_i2t_examples_01_05.png}
    \caption{\textbf{Image to text hierarchy comparisons for five object queries.} Rows show electric guitar, flamingo, grand piano, ladybug, and laptop. The left column uses RAOC, and the right column omits it. Dashed blue lines trace the query's semantic path, while gray lines show related branches. RAOC preserves the angular preference for the query class and gives a consistent radial progression along each path. The radial reversals for electric guitar and ladybug without RAOC illustrate why angular alignment alone does not ensure hierarchical order.}
    \label{fig:app-i2t-01-05}
\end{figure*}

\begin{figure*}[p]
    \centering
    \includegraphics[width=\textwidth,height=0.84\textheight,keepaspectratio]{figures/aurora_hierarchy/aurora_vits_i2t_examples_06_10.png}
    \caption{\textbf{Image to text hierarchy comparisons for five further queries.} Rows show leopard, lion, monarch, zebra, and desk, with RAOC on the left and without RAOC on the right. RAOC places the displayed semantic paths in a consistent radial order. Without RAOC, the animal examples exhibit reversals between successive concepts even when the query class remains well aligned. The desk path is ordered in both variants, showing that the effect concerns consistency across examples rather than the absence of all hierarchy without RAOC.}
    \label{fig:app-i2t-06-10}
\end{figure*}

\subsection{Text to Image}
\label{app:hierarchy-t2i}

\begin{figure*}[p]
    \centering
    \includegraphics[width=\textwidth]{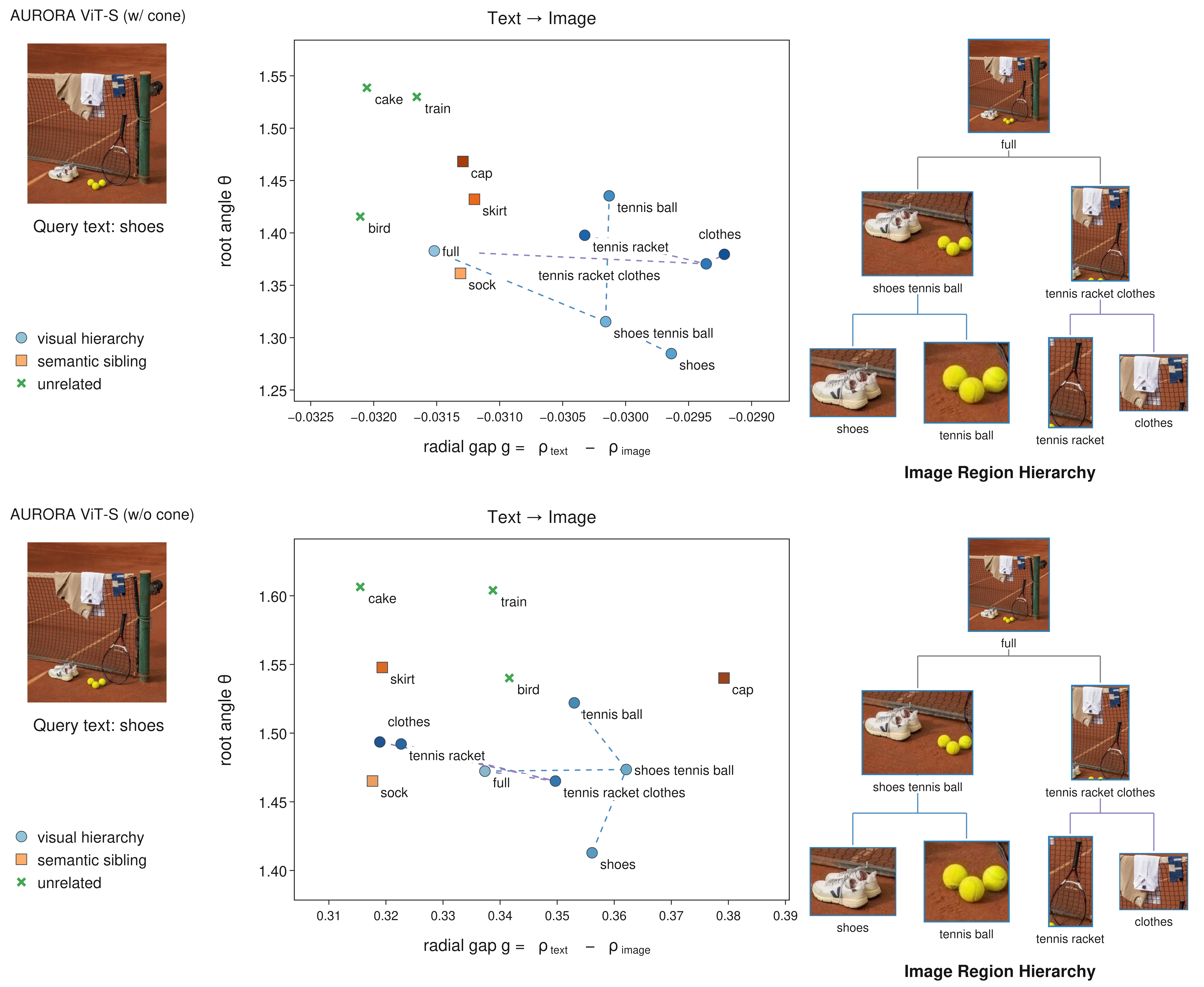}
    \caption{\textbf{Text to image hierarchy for the query ``shoes.''} AURORA ViT-S with RAOC is shown above the variant without RAOC. With RAOC, the full image, shoes and tennis ball region, and shoes crop follow a consistent radial progression as their root angles decrease. Without RAOC, the shoes crop remains the most aligned region, but the final step reverses the radial direction. The region tree provides the common reference hierarchy.}
    \label{fig:app-t2i-shoes}
\end{figure*}

\begin{figure*}[p]
    \centering
    \includegraphics[width=\textwidth]{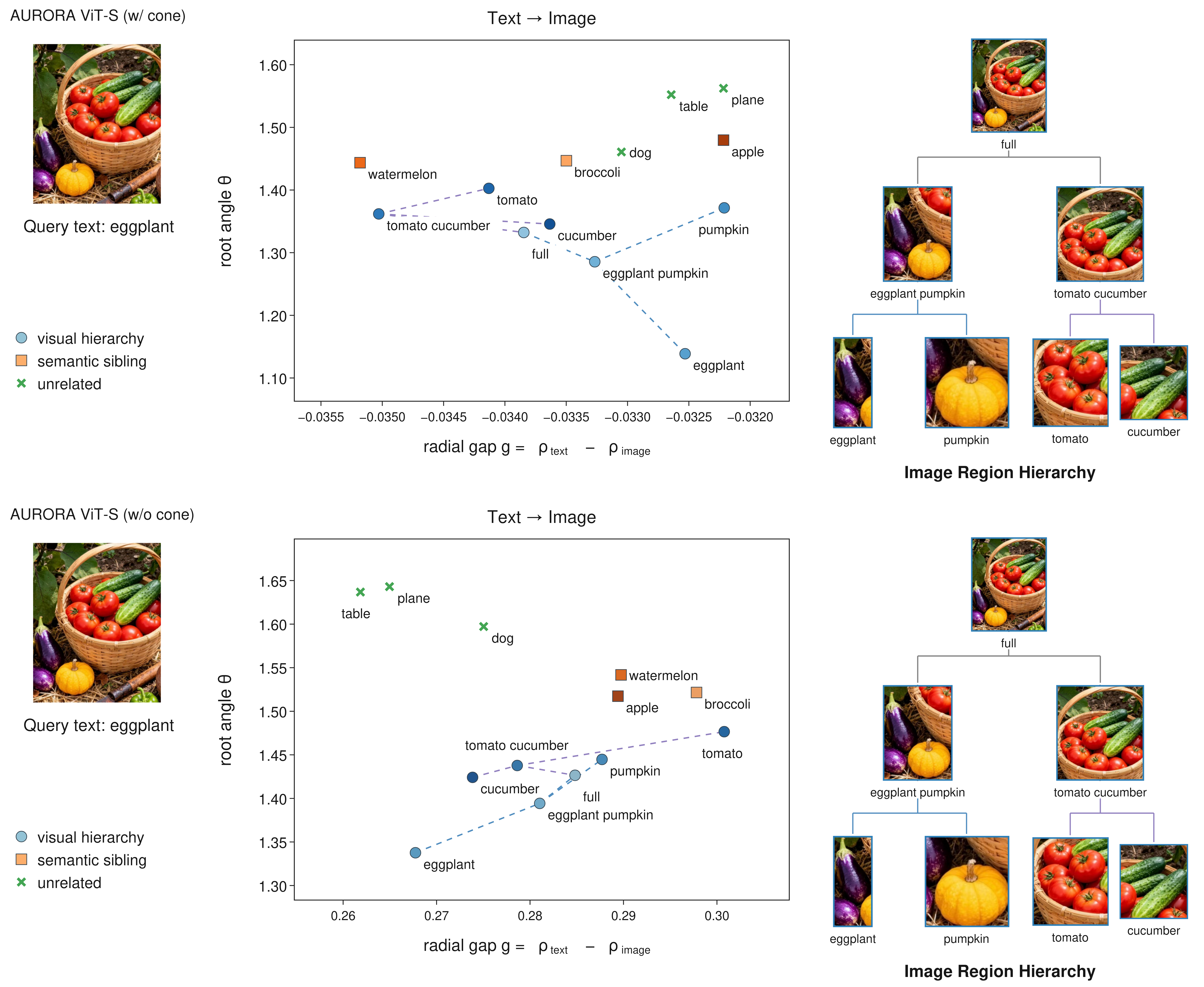}
    \caption{\textbf{Text to image hierarchy for the query ``eggplant.''} The top panel uses RAOC, and the bottom panel omits it. Both models assign the eggplant crop a smaller root angle than the full image. RAOC also places the full image, eggplant and pumpkin region, and eggplant crop in the expected radial order. Without RAOC, this branch progresses in the opposite radial direction despite retaining its angular preference for eggplant.}
    \label{fig:app-t2i-eggplant}
\end{figure*}

\begin{figure*}[p]
    \centering
    \includegraphics[width=\textwidth]{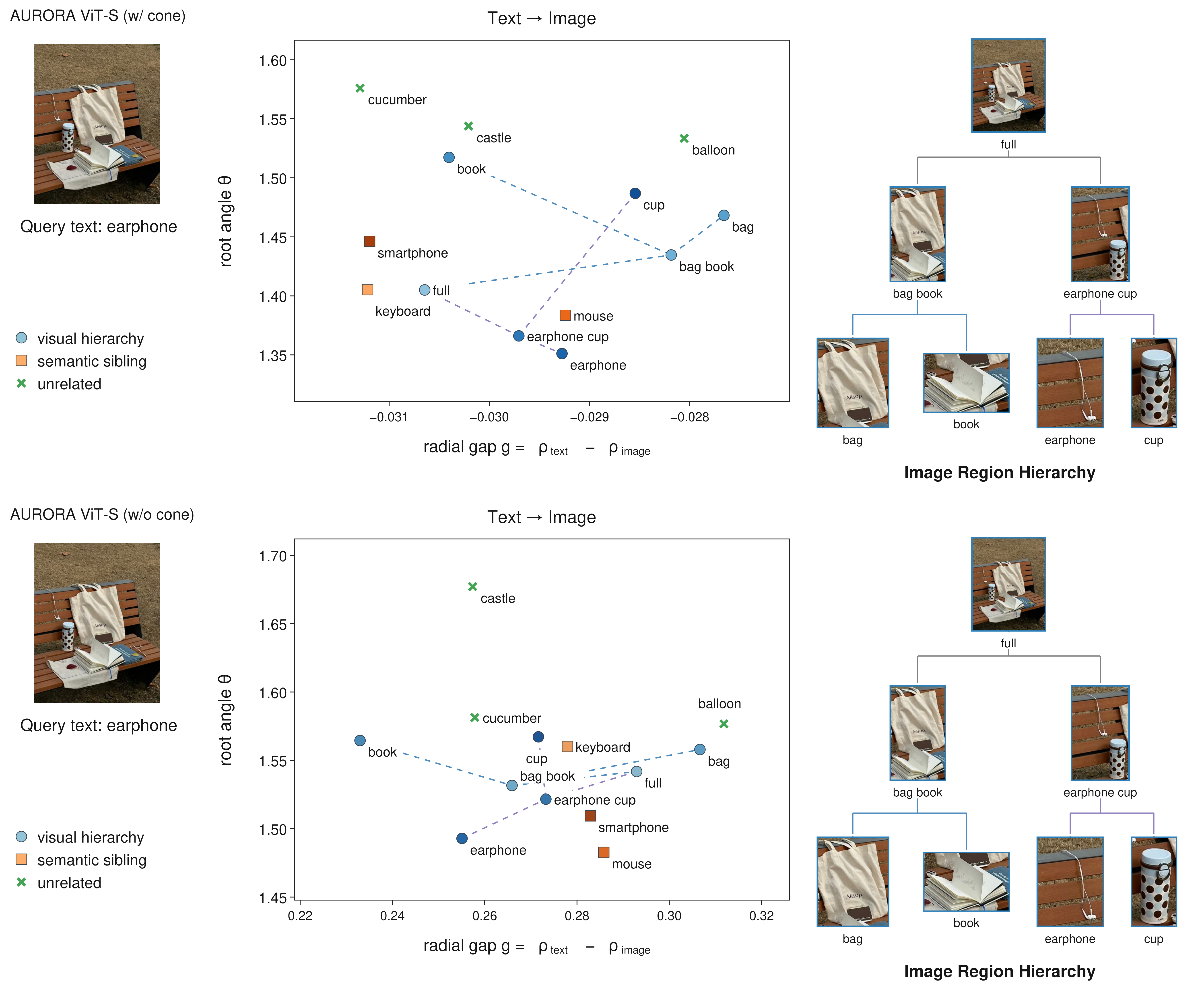}
    \caption{\textbf{Text to image hierarchy for the query ``earphone.''} The top and bottom panels show the models with and without RAOC, respectively. With RAOC, the full image, earphone and cup region, and earphone crop combine progressively smaller root angles with increasing radial gaps. Without RAOC, the same angular trend is present, but the radial sequence is reversed. The comparison separates semantic alignment from the hierarchical ordering encouraged by RAOC.}
    \label{fig:app-t2i-earphone}
\end{figure*}

\begin{figure*}[p]
    \centering
    \includegraphics[width=\textwidth]{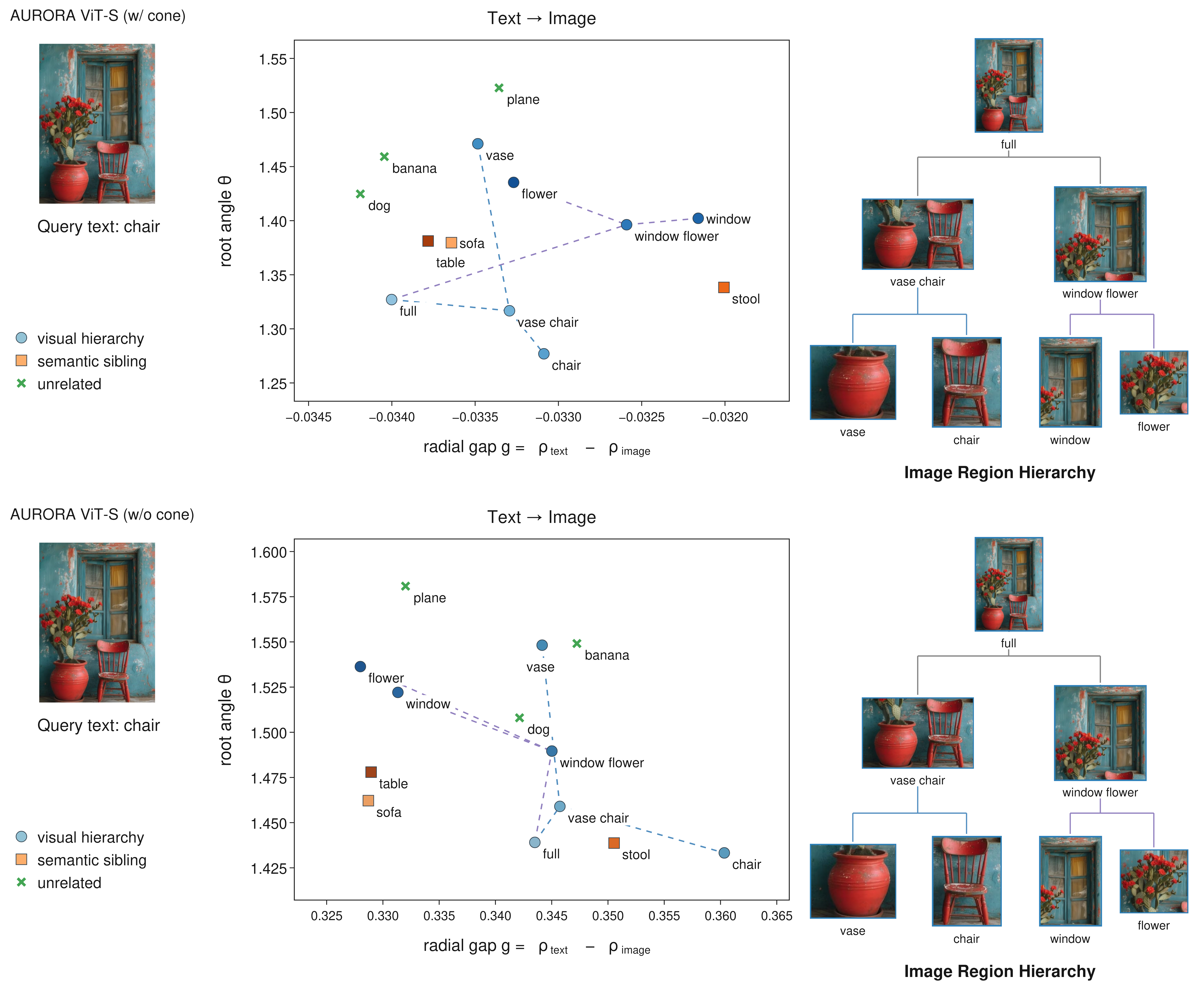}
    \caption{\textbf{Text to image hierarchy for the query ``chair.''} RAOC is used in the top panel and omitted in the bottom panel. Both variants retain radial progression along the full image, vase and chair region, and chair crop. With RAOC, their root angles also decrease consistently. Without RAOC, the intermediate region has a larger root angle than the full image, revealing less consistent angular organization along this branch.}
    \label{fig:app-t2i-chair}
\end{figure*}

\begin{figure*}[p]
    \centering
    \includegraphics[width=\textwidth]{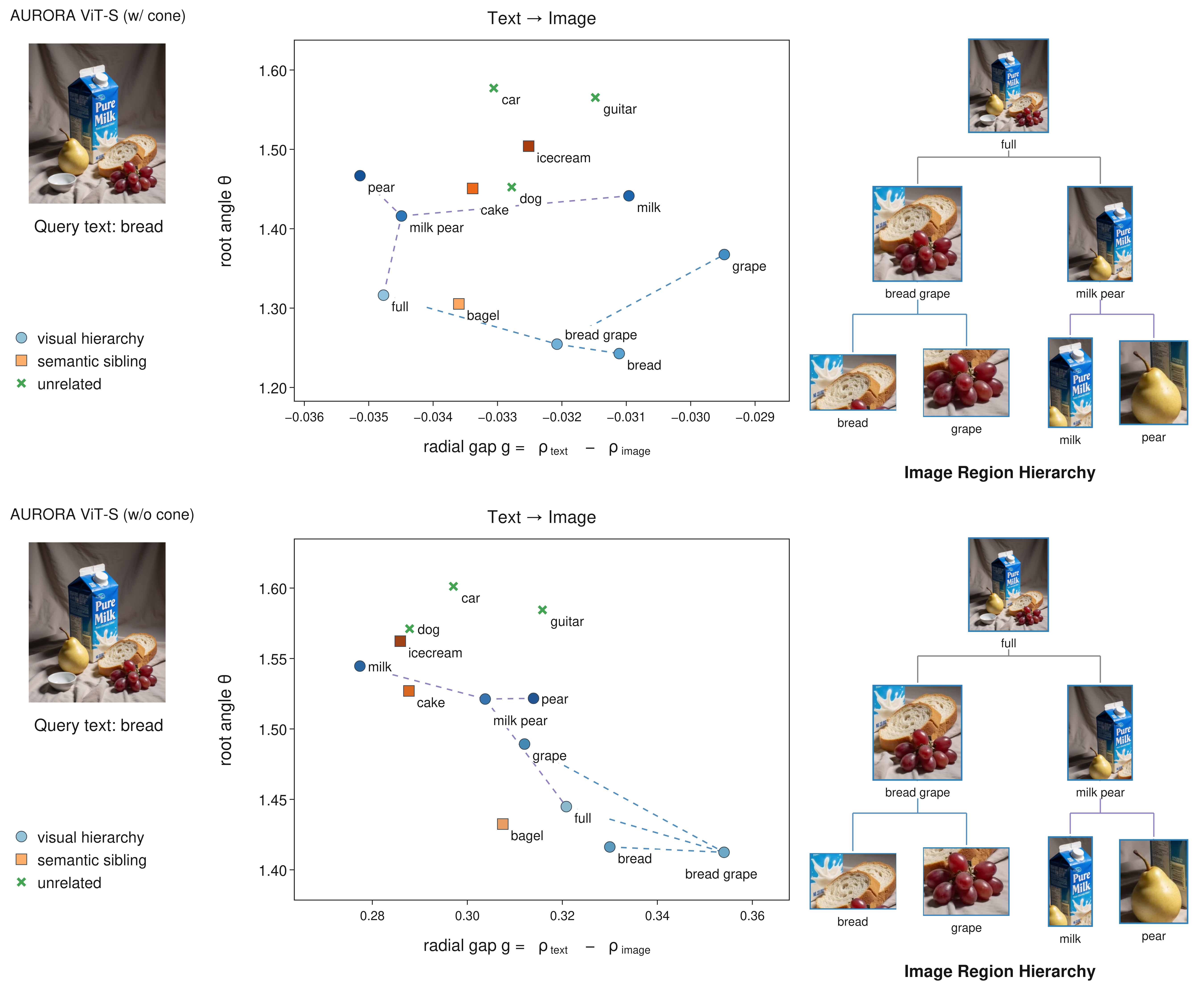}
    \caption{\textbf{Text to image hierarchy for the query ``bread.''} The model with RAOC appears above the model without RAOC. With RAOC, the full image, bread and grape region, and bread crop form a consistent sequence in both radial gap and root angle. Without RAOC, the bread and grape region has a slightly smaller root angle than the bread crop, and the final transition reverses the radial direction. RAOC yields clearer organization along the branch containing the queried object.}
    \label{fig:app-t2i-bread}
\end{figure*}
Figures~\ref{fig:app-t2i-shoes}--\ref{fig:app-t2i-bread} compare five text queries: shoes, eggplant, earphone, chair, and bread. Each figure places the model with RAOC above the model without RAOC. The image region hierarchy on the right provides the same reference structure for both models, and the colored dashed links identify its two branches in the scatter plot. With RAOC, the branch containing the queried object follows an increasing radial gap and a decreasing root angle from the full image through the intermediate region to the object crop. Without RAOC, angular alignment often remains, but the radial progression can reverse or the intermediate region can be less aligned than the full image. These examples provide qualitative evidence that RAOC couples semantic alignment with the radial organization associated with hierarchical partial order.

\section{Hyperparameter Sensitivity}
\label{app:hyperparameter-sensitivity}

We examine the sensitivity of AURORA to four geometric and optimization
parameters: the RAOC angular allowance $\beta$, the entailment weight
$\lambda_e$, the norm regularization weight $\lambda_n$, and the radial
margin scale $\sigma_r$. Each experiment varies one parameter while keeping
the others fixed at the default configuration:
$\beta=8$, $\lambda_e=0.15$, $\lambda_n=0.05$, and
$\sigma_r=0.1$. We use ViT-S/16 and the training setup described in
Appendix~\ref{app:implementation}, evaluating image classification,
COCO retrieval, and hierarchical classification. The default configurations
are highlighted.

\paragraph{RAOC angular allowance.}
The parameter $\beta$ controls the angular allowance associated with a given
radial gap. Smaller values impose stricter angular constraints, whereas
larger values provide more flexibility. Table~\ref{tab:sensitivity-beta}
examines the effect of this geometric parameter.

\begin{table}[H]
\centering
\caption{\textbf{Sensitivity to the RAOC parameter $\beta$.}
All other hyperparameters remain fixed at their default values.}
\label{tab:sensitivity-beta}
\vspace{2pt}
\scriptsize
\setlength{\tabcolsep}{5pt}
\renewcommand{\arraystretch}{1.06}
\begin{tabular}{@{}c *{6}{c}@{}}
\toprule
\cellcolor{vitBand}\textbf{Parameter} &
\multicolumn{2}{c}{\cellcolor{generalHead}\textbf{Classification}} &
\multicolumn{2}{c}{\cellcolor{fineHead}\textbf{COCO R@5}} &
\multicolumn{2}{c}{\cellcolor{specialHead}\textbf{Hierarchy}}\\
\cmidrule(lr){2-3}\cmidrule(lr){4-5}\cmidrule(lr){6-7}
$\beta$ &
\cellcolor{generalCell}ImageNet$\uparrow$ &
\cellcolor{generalCell}Food-101$\uparrow$ &
\cellcolor{fineCell}T$\rightarrow$I$\uparrow$ &
\cellcolor{fineCell}I$\rightarrow$T$\uparrow$ &
\cellcolor{specialCell}TIE$\downarrow$ &
\cellcolor{specialCell}$J\uparrow$\\
\midrule
2  & 41.31 & 49.61 & 55.02 & 68.71 & 3.69 & 0.77\\
4  & 41.18 & 49.82 & 55.31 & 68.97 & 3.67 & 0.77\\
\rowcolor{oursRow}
8  & 41.54 & 50.13 & 55.77 & 69.42 & 3.62 & 0.78\\
16 & 41.12 & 49.74 & 55.18 & 68.84 & 3.71 & 0.77\\
32 & 41.36 & 49.96 & 55.41 & 69.06 & 3.66 & 0.77\\
\bottomrule
\end{tabular}
\end{table}

\paragraph{Entailment weight.}
The weight $\lambda_e$ balances hierarchical entailment supervision and
semantic alignment. Table~\ref{tab:sensitivity-entailment} evaluates the
effect of different supervision strengths.

\begin{table}[H]
\centering
\caption{\textbf{Sensitivity to the entailment weight $\lambda_e$.}
All other hyperparameters remain fixed at their default values.}
\label{tab:sensitivity-entailment}
\vspace{2pt}
\scriptsize
\setlength{\tabcolsep}{5pt}
\renewcommand{\arraystretch}{1.06}
\begin{tabular}{@{}c *{6}{c}@{}}
\toprule
\cellcolor{vitBand}\textbf{Parameter} &
\multicolumn{2}{c}{\cellcolor{generalHead}\textbf{Classification}} &
\multicolumn{2}{c}{\cellcolor{fineHead}\textbf{COCO R@5}} &
\multicolumn{2}{c}{\cellcolor{specialHead}\textbf{Hierarchy}}\\
\cmidrule(lr){2-3}\cmidrule(lr){4-5}\cmidrule(lr){6-7}
$\lambda_e$ &
\cellcolor{generalCell}ImageNet$\uparrow$ &
\cellcolor{generalCell}Food-101$\uparrow$ &
\cellcolor{fineCell}T$\rightarrow$I$\uparrow$ &
\cellcolor{fineCell}I$\rightarrow$T$\uparrow$ &
\cellcolor{specialCell}TIE$\downarrow$ &
\cellcolor{specialCell}$J\uparrow$\\
\midrule
0.0375 & 40.82 & 48.93 & 54.72 & 68.31 & 3.81 & 0.76\\
0.075  & 41.37 & 49.88 & 55.39 & 69.03 & 3.68 & 0.77\\
\rowcolor{oursRow}
0.15   & 41.54 & 50.13 & 55.77 & 69.42 & 3.62 & 0.78\\
0.30   & 40.21 & 48.76 & 54.65 & 68.20 & 3.86 & 0.76\\
0.60   & 39.12 & 46.94 & 53.47 & 66.91 & 4.05 & 0.74\\
\bottomrule
\end{tabular}
\end{table}

\paragraph{Norm regularization weight.}
The weight $\lambda_n$ controls the radial scale regularization. 
Table~\ref{tab:sensitivity-norm} studies how different regularization
strengths affect the learned hierarchy representation.

\begin{table}[H]
\centering
\caption{\textbf{Sensitivity to the norm regularization weight
$\lambda_n$.}
All other hyperparameters remain fixed at their default values.}
\label{tab:sensitivity-norm}
\vspace{2pt}
\scriptsize
\setlength{\tabcolsep}{5pt}
\renewcommand{\arraystretch}{1.06}
\begin{tabular}{@{}c *{6}{c}@{}}
\toprule
\cellcolor{vitBand}\textbf{Parameter} &
\multicolumn{2}{c}{\cellcolor{generalHead}\textbf{Classification}} &
\multicolumn{2}{c}{\cellcolor{fineHead}\textbf{COCO R@5}} &
\multicolumn{2}{c}{\cellcolor{specialHead}\textbf{Hierarchy}}\\
\cmidrule(lr){2-3}\cmidrule(lr){4-5}\cmidrule(lr){6-7}
$\lambda_n$ &
\cellcolor{generalCell}ImageNet$\uparrow$ &
\cellcolor{generalCell}Food-101$\uparrow$ &
\cellcolor{fineCell}T$\rightarrow$I$\uparrow$ &
\cellcolor{fineCell}I$\rightarrow$T$\uparrow$ &
\cellcolor{specialCell}TIE$\downarrow$ &
\cellcolor{specialCell}$J\uparrow$\\
\midrule
0.0125 & 40.71 & 49.12 & 54.89 & 68.54 & 3.76 & 0.76\\
0.025  & 41.08 & 49.77 & 55.34 & 69.01 & 3.69 & 0.77\\
\rowcolor{oursRow}
0.05   & 41.54 & 50.13 & 55.77 & 69.42 & 3.62 & 0.78\\
0.10   & 41.29 & 49.96 & 55.46 & 69.18 & 3.70 & 0.77\\
0.20   & 40.62 & 48.85 & 54.73 & 68.37 & 3.88 & 0.76\\
\bottomrule
\end{tabular}
\end{table}

\paragraph{Radial margin scale.}
The parameter $\sigma_r$ scales the radial margin during optimization.
Table~\ref{tab:sensitivity-radial} evaluates whether performance depends
strongly on this optimization scale.

\begin{table}[H]
\centering
\caption{\textbf{Sensitivity to the radial margin scale $\sigma_r$.}
All other hyperparameters remain fixed at their default values.}
\label{tab:sensitivity-radial}
\vspace{2pt}
\scriptsize
\setlength{\tabcolsep}{5pt}
\renewcommand{\arraystretch}{1.06}
\begin{tabular}{@{}c *{6}{c}@{}}
\toprule
\cellcolor{vitBand}\textbf{Parameter} &
\multicolumn{2}{c}{\cellcolor{generalHead}\textbf{Classification}} &
\multicolumn{2}{c}{\cellcolor{fineHead}\textbf{COCO R@5}} &
\multicolumn{2}{c}{\cellcolor{specialHead}\textbf{Hierarchy}}\\
\cmidrule(lr){2-3}\cmidrule(lr){4-5}\cmidrule(lr){6-7}
$\sigma_r$ &
\cellcolor{generalCell}ImageNet$\uparrow$ &
\cellcolor{generalCell}Food-101$\uparrow$ &
\cellcolor{fineCell}T$\rightarrow$I$\uparrow$ &
\cellcolor{fineCell}I$\rightarrow$T$\uparrow$ &
\cellcolor{specialCell}TIE$\downarrow$ &
\cellcolor{specialCell}$J\uparrow$\\
\midrule
0.025 & 41.08 & 49.74 & 55.21 & 68.92 & 3.70 & 0.77\\
0.05  & 41.32 & 50.01 & 55.54 & 69.18 & 3.65 & 0.77\\
\rowcolor{oursRow}
0.10  & 41.54 & 50.13 & 55.77 & 69.42 & 3.62 & 0.78\\
0.20  & 41.46 & 50.04 & 55.68 & 69.31 & 3.64 & 0.78\\
0.40  & 41.38 & 49.91 & 55.59 & 69.20 & 3.67 & 0.77\\
\bottomrule
\end{tabular}
\end{table}
\section{Reproducibility}
\label{app:repro}

The main paper presents the objectives, methodology, and key experimental
findings. Appendices~\ref{app:implementation} and \ref{app:protocols}
provide implementation details, training configurations, evaluation
protocols, and computational settings. We will release code, checkpoints, 
configurations, data processing details, and evaluation resources,
including scripts for reproducing RAOC-based geometric analyses and
reported benchmark results.
\end{document}